%% file: main.tex
\def\arxivbuild{}

\documentclass{article} %
\usepackage{iclr2027_conference,times}
\usepackage[T1]{fontenc} %

\input{math_commands.tex}

\usepackage{graphicx}
\usepackage{placeins}
\usepackage{amsmath,amssymb,amsthm}
\usepackage{booktabs}
\usepackage{colortbl}
\usepackage{tabularx}
\usepackage{multirow}
\usepackage{xspace}
\usepackage{url}
\usepackage{hyperref}
\hypersetup{hidelinks}
\definecolor{paperrow}{RGB}{238,243,247}

\input{macros}

\newtheorem{theorem}{Theorem}

\newtheorem{corollary}[theorem]{Corollary}

\theoremstyle{definition}

\title{What Should We Freeze?\\[1pt]
{\normalsize Guarded Freezing: Connectivity Shapes the Fine-Tuning of Pretrained Models}}

\ifdefined\arxivbuild
\author{Leonel Aguilar\\
Chair of Cognitive Science, ETH Z\"urich\\
\texttt{aleonel@ethz.ch}}
\iclrfinalcopy
\else
\author{Anonymous authors\\
Paper under double-blind review}
\fi

\begin{document}

\maketitle
\ifdefined\arxivbuild\lhead{Preprint}\fi

\begin{abstract}
When adapting pre-trained models through fine-tuning, freezing weights alone might not preserve performance, as updates elsewhere can change the inputs to the frozen core, ultimately affecting overall performance.
We first analyse the case where a selected frozen core can be isolated and propose \emph{removal-value}, a capacity-based score that approximates HOPE's removal cost averaged over removal orders. We show that in VGG-8, cutting paths from trainable neurons into a frozen core makes selection using this score useful: $70\%$ frozen preserves $5.22\pm0.51$ percentage points more old-task accuracy than DEFT at similar new-task accuracy.
In transformers, shared residual streams leave paths into frozen neurons open.
For this case, we derive \emph{drift-value}, a forward-only proxy for the output disturbance from updating each weight entry under a local update model. In language models, at 40 epochs, this policy exceeds adapted Wanda and RIA freezing scores in settings with substantial retention loss, while its differences from Fisher remain unresolved.
After 160 epochs on Qwen2.5-1.5B, it retains $0.0433\pm0.0102$ more than static Fisher.
In DINOv3 vision-transformer adaptation to point clouds, drift-value retains $0.440$ image accuracy versus $0.187$ for a random
mask of the same count. These results motivate \emph{Guarded Freezing}: select by removal-value when incoming paths are cut, and by drift-value when they remain.
\end{abstract}

\input{sections/intro}
\input{sections/instruments}
\input{sections/theory}
\input{sections/budget}
\input{sections/results}
\input{sections/limitations}
\input{sections/related}
\input{sections/conclusion}
\phantomsection\label{end:maintext}

\subsection*{AI use statement}
We used an AI coding assistant throughout this work. It gave feedback on experiment
protocols and controls. It helped to implement the methods, the evaluation harness,
the experiment orchestration and the provenance checks, and it searched
and summarized literature. The authors provided the ideas, steered and revised every part of the process.
The grouping, monotonicity and scale-invariance results were formalized in Lean~4 with the AI assistant.
The synthetic dataset for the invented-fact task was generated by a seeded script using fixed
sentence templates. Additional AI tools helped in drafting and editing text. The authors read every sentence, created, evaluated and verified every proof and every figure.
The authors take full responsibility for the content of this paper, including text, claims and artifacts produced with the aid of generative AI.

\subsection*{Ethics statement}
This work releases code, result files and calibration arrays, and no
trained weights.
In pre-trained models, selective freezing and pruning can be turned
toward removing an ability, and the pruning control describes what a cut
destroys as much as what it preserves. The adaptation experiments teach
invented facts and one new modality to public models, but no trained
weights are released.

\subsection*{Reproducibility statement}
The supplementary material holds the code, a run registry naming every
experiment with its command line, the result files every number in the
paper is recomputed from, the scripts that build every figure and table,
and the check that recomputes each quoted number from its source.
Appendix~\ref{app:settings} and Appendix~\ref{app:cells} give every
setting. Appendix~\ref{app:proofs} gives the proofs,
and the supplementary material holds Lean~4 formalizations of the grouping,
monotonicity and scale-invariance results of Appendix~\ref{app:proofs}.
Language runs use one 24~GB GPU at 1.7B parameters and an 80~GB GPU at
8B. The numerical consistency checks run on a CPU.

\bibliography{refs}
\bibliographystyle{iclr2027_conference}

\appendix
\renewcommand{\tablename}{Appendix Table}
\renewcommand{\figurename}{Appendix Figure}
\input{appendix/appendix}

\end{document}

%% file: math_commands.tex
\usepackage{amsmath,amsfonts,bm}

\def\eqref#1{equation~\ref{#1}}

\def\1{\bm{1}}

\def\vs{{\bm{s}}}

\DeclareMathAlphabet{\mathsfit}{\encodingdefault}{\sfdefault}{m}{sl}
\SetMathAlphabet{\mathsfit}{bold}{\encodingdefault}{\sfdefault}{bx}{n}

%% file: macros.tex
\newcommand{\nrn}{i}                          %
\newcommand{\layer}{\ell}                     %
\newcommand{\act}{\psi}                       %
\newcommand{\actn}{\act_{\nrn}}               %
\newcommand{\wout}{\mathbf{w}^{\text{out}}}   %
\newcommand{\woutn}{\wout_{\nrn}}
\newcommand{\cpc}{c}                          %
\newcommand{\cpcn}{\cpc_{\nrn}}               %
\newcommand{\cpcdef}{\cpcn=\lVert\woutn\rVert\sqrt{\mathbb{E}_{\mathcal{D}}[\actn^{2}]}}
\newcommand{\Ecap}{E}                         %
\newcommand{\Nn}{N}                           %

\newcommand{\Phin}{\Phi_{\nrn}}
\newcommand{\Phidef}{\Phin=\frac{\Nn\,\cpcn}{\Ecap-\cpcn}}

\newcommand{\coal}{A}                         %

%% file: sections/intro.tex
\section{Introduction}
\label{sec:intro}

Adapting a pre-trained neural network requires a balance: learn a new task while
retaining existing abilities. Selective fine-tuning restricts updates to
a subspace \citep{lora2022} or a subset of parameters \citep{ssu2025,s2ft2024}.
For methods that hold selected weights fixed, the practical question is:
\emph{which parameters should we freeze?}
We study how the answer depends on the connections between trainable
parameters and the computation we want to preserve.

A frozen neuron can receive different inputs after upstream parameters change.
The structural mask of DEFT \citep{hope2026} prevents this by cutting the
connections from trainable neurons into the frozen neurons, while the
reverse connections remain, so new computation can reuse fixed features
(Figure~\ref{fig:schematic}a).
In transformers, frozen neurons usually read a shared residual stream
that other parameters continue to update (Figure~\ref{fig:schematic}b).
The relevant distinction is whether the intervention blocks these paths,
and it gives the two criteria of \emph{Guarded Freezing}.
HOPE's \emph{capacity} measures a unit's contribution and supplies a removal
cost, which also underlies its DEFT freeze rule \citep{hope2026}.
When incoming trainable connections are cut, removal-value selects the
computation worth preserving through that intervention.
When these paths remain, we derive \emph{drift-value}, from one unlabeled
forward pass, to estimate the disturbance caused by parameter updates.

We first analyse the case when isolating a frozen core is possible.
On VGG-8, removal-value selection improves old-task accuracy over DEFT by
$5.2$ percentage points at $70\%$ frozen and similar new-task accuracy, and
removing the cuts removes advantages over random selection.
We then analyse language and vision transformers where the tested interventions leave paths into the frozen core open.
For these cases, we develop and compare drift-value with published selection rules.
We evaluate six language models. At 40 epochs, drift-value leads adapted Wanda and RIA freezing scores where attention freezing leaves substantial retention loss. Its differences from empirical Fisher remain unresolved.
An empirical headroom rule predicts that selection recovers about a third
of the retention loss left by attention freezing. It predicts gains in four
settings with models or rates excluded from the fit.
After 160 epochs, drift-value leads static Fisher by
$0.0433\pm0.0102$ on Qwen2.5-1.5B \citep{qwen25_2024}.
In DINOv3, it raises image retention from $0.097$ to $0.440$ during
point-cloud adaptation under a high-learning-rate stress protocol.

Our contribution is Guarded Freezing: freeze by removal-value when incoming
trainable paths are cut, and by drift-value, computed forward-only, when
they remain.

\begin{figure}[t]
\centering
\includegraphics[width=\textwidth]{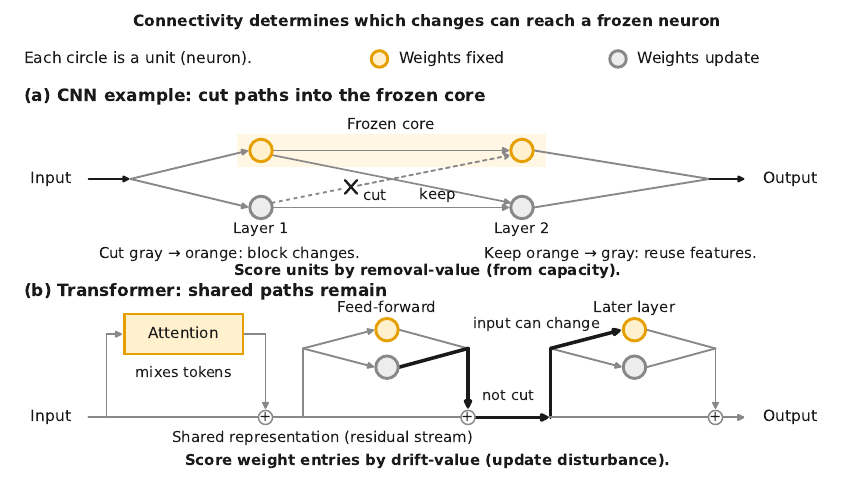}
\caption{\textbf{Connectivity determines which changes can reach a frozen neuron.}
Colors mark fixed or trainable neurons and attention, and arrows show activation flow.
(a) Cut trainable-to-frozen connections to block changes for fixed external
inputs and unchanged normalization. Keep frozen-to-trainable connections
so new computation can reuse fixed features. Our removal-value scores units.
(b) Freezing weights leaves the shared residual stream connected.
The bold path carries changes into a frozen neuron even with attention frozen.
Drift-value scores weight entries by this disturbance.
Plus signs denote addition. Selected layers are shown without normalization.}
\label{fig:schematic}
\end{figure}

%% file: sections/instruments.tex
\section{Connectivity and the freezing problem}
\label{sec:setup}

A \emph{unit} is a hidden feed-forward neuron with input weights and an
output value. An \emph{entry} is one scalar weight.
A \emph{frozen core} is a selected set of units whose weights stay fixed.
The \emph{residual stream} is the running representation that transformer
blocks read and update. A \emph{writer} contributes to it.

\subsection{When is a frozen computation protected?}
\label{sec:which}

If all paths from trainable parameters to a frozen core are removed,
its computation stays fixed for fixed external inputs.
Training then introduces no input drift into the core beyond the initial change caused by cutting its incoming connections.
An \emph{exposed core} keeps paths through which upstream updates can change
its inputs. Freezing its weights alone does not preserve its performance.

Residual connections preserve such paths in the transformer experiments.
HOPE notes that an identity connection in a residual architecture can
bypass its structural mask \citep{hope2026}.
Section~\ref{sec:wiring} tests changing connectivity
within one architecture.

\subsection{What counts as successful protection?}

\emph{Retention} measures performance on the old task after adaptation.
\emph{Acquisition} measures performance on the new task.
We report both: a policy that retains old abilities by failing to learn
the new task does not solve the adaptation problem.

We compare selection rules with published methods and use random masks
to measure the value of selection at the same count.
Frozen units, entries and columns (weights sharing one input feature) impose different update constraints.
Their fractions are not interchangeable parameter budgets.
We report means $\pm$ standard errors (SE), with method differences paired
by seed or transfer scenario. Appendices~\ref{app:settings} and~\ref{app:cells} give the protocols.

%% file: sections/theory.tex
\section{Removal-value and drift-value}
\label{sec:map}

\subsection{Removal-value from capacity}
\label{sec:value}

Following HOPE \citep{hope2026}, we define the capacity of unit $\nrn$ as
\begin{equation}
\cpcdef ,
\label{eq:capacity}
\end{equation}
where $\actn$ is its activation, $\woutn$ its output column, and
$\mathcal D$ the calibration data.
Capacity is the output-column norm multiplied by the activation's root
mean square. We estimate it on calibration data, whereas HOPE uses a
Gaussian surrogate. It measures contribution size, not task loss.
HOPE's removal cost compares this capacity with that of the surviving units.

Averaging the cost over removal orders gives, in the wide-layer
mean-field limit,
\begin{equation}
\Phidef ,
\label{eq:phi}
\end{equation}
where $\Nn$ is the layer width and $\Ecap=\sum_j\cpc_j$.
We call this wide-layer approximation the \emph{removal-value} score.
It approximates the nonterminal order average of the removal cost and
equals HOPE's pruning cost read at the intact layer. DEFT instead reads
cost along one greedy removal path. Replacing surviving capacity by its
mean gives this approximation. Appendix~\ref{app:proofs} defines the
exact average and explains where that replacement can fail.

Within a layer, Equation~\ref{eq:phi} is strictly increasing in capacity,
so it ranks units exactly as capacity does and as the L2 norm of a unit's
Wanda scores \citep{wanda2023} does. This structured score combines the
per-weight scores in a unit's output column using their L2 norm. Our freezing
comparisons instead use Wanda's individual weight scores.
Global thresholding of removal-value gives \emph{removal-value allocation}.
\emph{Uniform allocation} retains the same fraction in every layer.
Appendix~\ref{app:ladder} states what each score reads.

\subsection{Drift-value for parameter updates}
\label{sec:drift}

Let $D$ stack the changes in frozen units' gate pre-activations after
fine-tuning, evaluated on held-out text.
A local model relates the resulting loss to $\lVert D\rVert^2$.
This model assumes a stationary point of the old-task loss, isotropic
curvature in the drift coordinates, and small contributions from the
pathways it omits.
The output perturbation is first order in parameter changes and the
leading loss term at a stationary point is second order. Relating that loss
to retention accuracy requires an empirical fit (Appendix~\ref{app:drift}).

For a linear map $y=Wx$, an update $\delta$ to entry $(i,j)$ changes
$y_i$ by $\delta x_j$.
This perturbation does not depend on the current weight magnitude.
Assume equal-scale updates with independent signs, as an approximation
to normalized optimizer steps.
The expected squared output disturbance is then additive across entries.
Factoring input moments from downstream squared gain $r_i^2$ gives
\begin{equation}
\pi_{ij}=r_i^2\,\mathbb E_{\mathcal D}[x_j^2].
\label{eq:drift}
\end{equation}
Here $r_i^2$ includes the next projection and the intervening activation.
Appendix~\ref{app:drift} gives separate formulas for gate, up, and down projections.
A forward calibration pass supplies the required activation moments.
At a fixed count, we freeze entries with the largest scores.

Our \emph{empirical Fisher} baseline averages squared weight gradients
of the next-token loss, with one gradient per calibration prompt.
It assigns one score to each weight, the Fisher diagonal. We do not
compute the full Fisher matrix of interactions between weights.
For one use of $y=Wx$, the weight gradient is $g_i x_j$, where
$g_i=\partial\mathcal L/\partial y_i$. Fisher therefore depends on
loss gradients, whereas drift-value uses input activity and the forward
gain $r_i^2$ to estimate output disturbance.
A separate factored Fisher control multiplies input and output-gradient
second moments (Appendix~\ref{app:drift}). It is not the main Fisher baseline.

\paragraph{From a score to an adaptation policy.}
Guarded Freezing pairs the score with the intervention: removal-value
when incoming paths are cut, drift-value when they remain.
In transformers we freeze attention, then select feed-forward entries by drift-value.
The measured drift decomposition (Appendix~\ref{app:drift}) identifies
attention as the largest source of input changes, but unlike the entry
score it needs trained checkpoints.

%% file: sections/budget.tex
\section{Isolated frozen core: A controlled connectivity test in a CNN}
\label{sec:wiring}

In the first experiment, we select the core by our removal-value score (Section~\ref{sec:value}) and use DEFT as the selection baseline \citep{hope2026}. Following DEFT's comparison protocol, we adapt a VGG-8 trained on CIFAR-100 \citep{cifar2009} to SVHN digit classification \citep{svhn2011}. Our DEFT port, without HOPE's merge step, reproduces their published comparison (Appendix Table~\ref{tab:hope-reproduction}). We then add additional controls: 1) a random selection core and 2) gentle fine-tuning, an unrestricted full fine-tuning at a lower learning rate. We compare the same selection rules with and without connections from trainable neurons into the frozen core, which separates connectivity from the selection rule.

Retention is accuracy on the original CIFAR-100 classes. Acquisition is accuracy on SVHN digits.
H-score is their harmonic mean, reported on a 0--100 scale.
All policies tested share 20 scenarios (four seeds $\times$ five source draws).
Each draw selects four CIFAR-100 superclasses (20 classes).
The SVHN target always contains all ten digits.
Core policies use learning rate $0.01$, while gentle fine-tuning uses $0.0005$. Core policies therefore train at twenty times its rate. The isolated and exposed versions of each core use the same rate.

\begin{figure}[t]
\centering
\includegraphics[width=\textwidth]{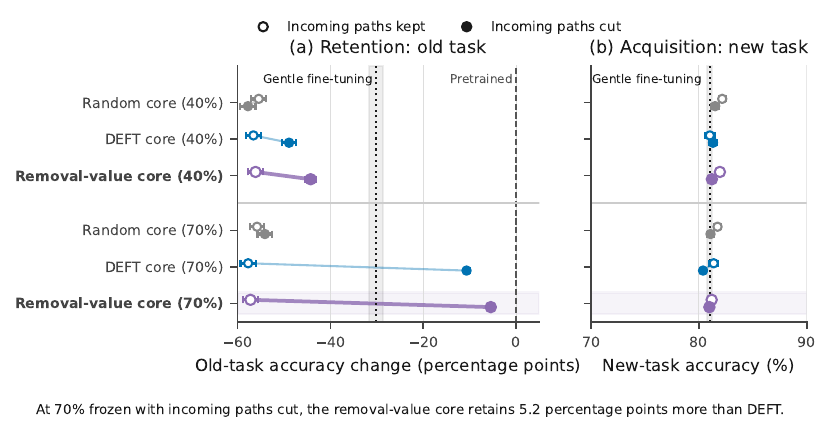}
\caption{\textbf{Cutting incoming trainable paths protects selected cores,
but does not similarly protect random cores.} Twenty VGG-8 adaptations,
with $40\%$ or $70\%$ of units frozen. Left: old-task accuracy change
from pretrained (dashed line). Right: new-task accuracy. Filled/open
markers show incoming paths cut/kept. Core arms train at rate $0.01$.
Gentle fine-tuning (dotted line) uses $0.0005$. Error bars and the
gentle fine-tuning band show one SE. The pale row highlights the
$70\%$ removal-value core. Appendix Table~\ref{tab:isolation} gives H-score.}
\label{fig:regime}
\end{figure}

\paragraph{Removal-value helps when the core is isolated.}
Figure~\ref{fig:regime} compares removal-value, DEFT \citep{hope2026} and random
cores and gentle fine-tuning on the same 20 VGG-8 transfers.
At $40\%$ frozen, isolated removal-value and DEFT cores retain $18.54\%$
and $13.88\%$ old-task accuracy, below gentle fine-tuning's $32.56\%$.
At $70\%$, both exceed it: $57.33\%$ and $52.11\%$.
New-task acquisition remains approximately $81\%$.
The retention advantage over DEFT at $70\%$ is $5.22\pm0.51$
percentage points.

\paragraph{Removing isolation removes the advantage.}
Without isolation, removal-value has no resolved advantage over random
selection at $40\%$ frozen and retains less at $70\%$.
Isolation widens the H-score gap between removal-value and random cores by
$21.35\pm2.07$ points at $40\%$ frozen and $53.83\pm1.75$ at $70\%$.

%% file: sections/results.tex
\section{Selecting parameters in language transformers}
\label{sec:stream}

We now evaluate the score where frozen neurons read a shared residual stream,
the second branch of Guarded Freezing.
The policy freezes all attention parameters, then the feed-forward entries
with the highest drift-value. Other parameters remain trainable, including
block normalization gains.

\subsection{Which entries to freeze}

\paragraph{Tasks and protocol.}
Fine-tuning teaches $150$ generated entity facts (year, founder, place or
count) through four paraphrases each. Acquisition is the fraction whose
answer ranks first against nine same-type distractors under a separate
question, scored by mean token log-likelihood (Appendix~\ref{app:cells}).
Retention is the lowest accuracy across HellaSwag, ARC-Easy and SciQ
\citep{hellaswag2019,arc2018,sciq2017},
called \emph{worst-task accuracy}. Each task is multiple choice, four-way
except for a few ARC-Easy items, with chance accuracy approximately $0.25$ and $200$ evaluation items per seed.
Each run trains for 40 epochs.
The main comparison uses nine seeds on Qwen3-1.7B \citep{qwen3_2025},
bf16 stochastic rounding (bf16 SR, which rounds updates probabilistically),
and learning rate $10^{-4}$. This rate stresses retention.
We call a paired difference resolved when its magnitude exceeds
$\max(0.031,2\,\mathrm{SE})$ (Appendix~\ref{app:headroom}).

\paragraph{What the freezing policy adds.}
On Qwen3-1.7B, full fine-tuning retains $0.263$ and attention freezing alone $0.356$ (Table~\ref{tab:races}). On this model and learning rate, the rounding model estimates that ordinary bf16 discards typical updates
to $35.2\%$ of feed-forward entries. We use that fraction in every feed-forward matrix
as a common budget (Appendix~\ref{app:precision}).
A random mask at this count retains $0.4261$ and drift-value
$0.5022$, a paired gain of $0.0761\pm0.0062$, positive in all nine seeds, at
acquisition $0.9904$.
The gain holds under Kahan-compensated bf16, $0.0739\pm0.0064$,
and at $20\%$ and $50\%$ of every matrix, $0.0856\pm0.0131$ and
$0.0467\pm0.0118$.

\paragraph{Comparison with published selection rules.}
Table~\ref{tab:races} compares drift-value with four rules under the same
attention-frozen protocol. Wanda \citep{wanda2023} and Relative Importance
and Activations (RIA) \citep{ria2024} score weights for pruning. We use
their scores to freeze the highest-scoring weights and update the rest.
Super-Tuning \citep{supertuning2026} also uses Wanda to select trainable
weights. Our Wanda row tests the score in this protocol, not the full
Super-Tuning method.
Source-Shielded Updates (SSU) \citep{ssu2025} freezes whole input columns:
all weights multiplying one input feature. Its budget is rounded to a
whole number of columns. The Fisher row uses the prompt-based empirical
Fisher defined in Section~\ref{sec:drift}. Appendix~\ref{app:cells} gives
the exact scores used.

Drift-value leads Wanda by $0.0439\pm0.0051$ and RIA by $0.0728\pm0.0087$,
positive in all nine seeds, and leads SSU by $0.0150\pm0.0061$ and trails
Fisher by $0.0089\pm0.0068$, both below the decision threshold.
The Fisher comparison remains unresolved on five more models (Figure~\ref{fig:stream}).
Fisher stays inside the threshold on every model, at most $0.0167\pm0.0086$ away
on Qwen2.5-1.5B.
The Wanda and RIA margins follow the loss the attention freeze leaves:
$0.0550\pm0.0072$ and $0.0794\pm0.0119$ on Qwen2.5-1.5B, and inside the threshold
on SmolLM2-1.7B \citep{smollm2_2025} and on Qwen3-0.6B in float32, where the freeze
leaves $0.1189$ of retention to recover or less
(Appendix Table~\ref{tab:headroom}).
The entry scores use 12 short English prompts (268 retained token positions)
for calibration. Fisher requires gradients while drift-value requires only a forward pass.
Across two calibration draws, Jaccard overlap (intersection divided by union)
is $0.6898$ for Fisher and $0.8932$ for drift-value, indicating more stable selection by drift-value. Larger calibration
samples reduce but do not remove this gap.

\begin{table}[t]
\centering
\small
\caption{\textbf{Drift-value leads Wanda and RIA. Its differences from SSU and Fisher remain unresolved.}
Qwen3-1.7B, nine seeds, bf16 SR, learning rate $10^{-4}$, 40 epochs.
Below the divider attention is frozen. Each selection rule additionally freezes $35.2\%$ of
every feed-forward matrix, except the attention-only reference. SSU approximately matches this fraction with whole input columns.
Means $\pm$ SE, differences paired by seed. Rows within each block are
ordered by retention. Random freezes the same count. Pale columns mark the
main readout in every table. Bold numeric differences exceed the stated decision threshold.}
\label{tab:races}
\input{figures/table5_races}
\end{table}

\paragraph{Comparison with adaptation policies.}
Appendix Table~\ref{tab:lora} places drift-value beside a DEFT-selected core \citep{hope2026} and
LoRA \citep{lora2022}.
At matched backbone precision and similar acquisition, drift-value retains
$0.0639\pm0.0099$ more than LoRA on Qwen3-0.6B in float32,
and $0.0533\pm0.0121$ more on Qwen2.5-1.5B in ordinary bf16.
On Qwen2.5-1.5B, a bf16 SR drift-value run trails ordinary-bf16 LoRA by
$0.0472\pm0.0115$. This comparison also changes precision
(Appendix Table~\ref{tab:lora}).

\paragraph{What about removal-value?} With $40\%$ of units frozen, removal-value and random cores have unresolved retention differences (Appendix Table~\ref{tab:unit-policies}). This is consistent with the exposed-core results in Section~\ref{sec:wiring}, where trainable paths remain open.

\begin{figure}[t]
\centering
\includegraphics[width=0.90\textwidth]{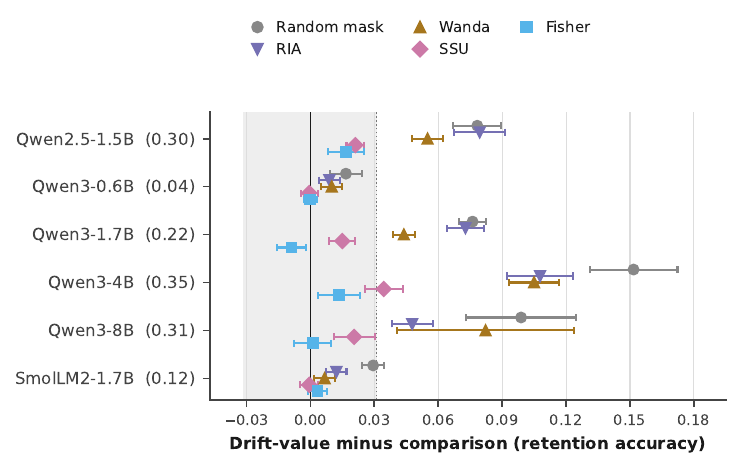}
\caption{\textbf{Drift-value and Fisher do not separate at 40 epochs. Selection gains grow where attention freezing leaves more retention loss.}
Paired retention difference between drift-value and each comparison across
six models, nine seeds. Positive values favor drift-value. Bars show one
paired SE. Parentheses give the retention loss after attention freezing.
Rows group models by family, then by increasing size.
The gray band marks $\pm0.031$, the minimum decision threshold. A resolved
difference must also exceed twice its paired SE.
All policies freeze attention and $35.2\%$ of each feed-forward matrix
(approximately for SSU, which freezes whole input columns). Qwen3-0.6B uses float32. The others use bf16 SR.
Appendix Table~\ref{tab:lora} lists every policy.}
\label{fig:stream}
\end{figure}

\subsection{How long a fixed score remains useful}
\label{sec:shelf}

The main comparisons compute both empirical Fisher and drift-value once
at the starting weights. We call these masks \emph{static}.
Both rankings can change during adaptation,
even while selected weights remain frozen.
On Qwen2.5-1.5B, drift-value minus Fisher is $+0.0022\pm0.0081$ at 40
epochs, $+0.0100\pm0.0062$ at 80 and $+0.0433\pm0.0102$ at 160, positive in
all nine seeds at 160, at acquisition $0.9970$ for both.
On Qwen3-1.7B it stays inside the threshold at every length, $-0.0122$,
$-0.0117$ and $-0.0050$, as it does at 160 epochs,
$+0.0211\pm0.0096$ on SmolLM2-1.7B at $2\times10^{-4}$ and
$-0.0028\pm0.0061$ on Qwen3-0.6B in bf16 SR.
At 160 epochs it also exceeds static SSU by $0.0383\pm0.0074$ on
Qwen3-1.7B and $0.0489\pm0.0063$ on Qwen2.5-1.5B, with similar fact
recall. Drift-value's recomputed set is more stable than
Fisher's: after 40 epochs its overlap with the previous set is $0.7452$
to $0.7614$, versus $0.4709$ to $0.5458$ for Fisher
(Appendix Figure~\ref{fig:shelf}). Refreshing Fisher every 20
epochs brings its retention within the decision threshold of one static
drift-value mask on both models. The long-adaptation
advantage over static Fisher is specific to Qwen2.5-1.5B in these tests.

\FloatBarrier
\section{The same selection rule in a vision transformer}
\label{sec:grow}

Finally, we teach point-cloud recognition to a DINOv3 vision transformer \citep{dinov3_2025} and measure whether it retains its image abilities.
Retention and acquisition are nearest-neighbor accuracy on CIFAR-100 images \citep{cifar2009}
and on ModelNet40 point clouds \citep{modelnet2015}.
All policies use the stress protocol, $30$ epochs at backbone learning rate
$10^{-3}$ (a standard protocol is $10$ epochs at $10^{-4}$), with nine
paired seeds for attention-only, random and drift-value policies at $35.2\%$. Other policies use four seeds.
Pretrained image accuracy is $0.728$.

\paragraph{Selective freezing improves image retention.}
Attention freezing alone retains $0.097$ of image accuracy.
Adding the drift-value entries at $35.2\%$ of every feed-forward matrix
raises retention to $0.440$, against $0.187$ for a random mask of the same
count.
The paired selection gain is $0.253\pm0.008$, positive in all nine seeds.
Point-cloud acquisition is $0.882$ for drift-value and $0.888$ for random
masks.
At the same entry fraction, four-seed SSU, Wanda and RIA freezing
scores retain $0.344$, $0.233$ and $0.212$ image accuracy, with
point-cloud acquisition from $0.883$ to $0.887$. On those four seeds,
drift-value leads Wanda by $0.213\pm0.049$ and RIA by
$0.235\pm0.054$. Its $0.102\pm0.059$ difference from SSU remains
unresolved. These are scores used as freezing rules,
not the original methods' full pipelines.
Full fine-tuning retains $0.025$, a removal-value core $0.036$, and DEFT
$0.044$ with trainable-to-core feed-forward connections cut or $0.030$ with them kept. Residual paths stay open in both.
Appendix Table~\ref{tab:vision-budget} lists every policy at three
budgets and Figure~\ref{fig:vision} the main ones.

\paragraph{Comparison with adapters and expansion.}
The \emph{annex}, our expansion baseline, adds feed-forward units and keeps the original backbone
frozen. Over four seeds it retains $0.725$ of image accuracy and acquires
$0.868$ on point clouds, and a parameter-matched bottleneck adapter
\citep{houlsby2019adapters} retains $0.373$.
The annex retains more than the drift-value mask on a different budget: it
adds units to a frozen backbone, whereas the policy updates existing
parameters, and training the annex's backbone reduces retention at the standard
rate (Appendix~\ref{app:figures}).

\begin{figure}[t]
\centering
\includegraphics[width=0.96\textwidth]{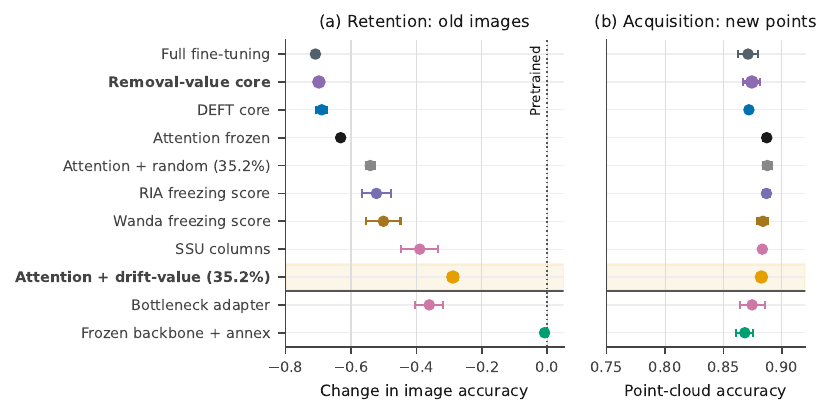}
\caption{\textbf{The annex retains the most image accuracy by adding units.
Drift-value exceeds Wanda and RIA at a matched entry budget.}
DINOv3 point-cloud adaptation under the stress protocol: 30 epochs,
backbone learning rate $10^{-3}$. Left: image accuracy change from
pretrained ($0.728$, dotted line). Right: point-cloud acquisition.
Attention is frozen in every entry-mask policy. At $35.2\%$ of each
feed-forward matrix, drift-value retains $0.253\pm0.008$ more image
accuracy than a matched random mask across nine paired seeds, with
similar acquisition. The random and drift-value
masks and attention freezing use nine seeds. Other rows use four.
Error bars show one SE.
Policies below the divider add parameters. Axis ranges differ.}
\label{fig:vision}
\end{figure}
\FloatBarrier

%% file: figures/table5_races.tex
\setlength{\tabcolsep}{4pt}
\begin{tabular*}{\textwidth}{@{}l@{\extracolsep{\fill}}cc>{\columncolor{paperrow}}c@{}}
\toprule
Policy / selection rule & \begin{tabular}[b]{@{}c@{}}Retention\\(worst-task accuracy)\end{tabular} & \begin{tabular}[b]{@{}c@{}}Acquisition\\(fact recall)\end{tabular} & \begin{tabular}[b]{@{}c@{}}$\Delta$ retention\\vs \textbf{drift-value}\end{tabular} \\
\midrule
Pretrained model & $0.577\pm0.005$ & -- & -- \\
Full fine-tuning & $0.263\pm0.010$ & $0.988\pm0.004$ & {\boldmath $-0.239\pm0.006$} \\
\midrule
Attention frozen & $0.356\pm0.011$ & $0.990\pm0.003$ & {\boldmath $-0.146\pm0.009$} \\
Random mask (control) & $0.426\pm0.008$ & $0.994\pm0.003$ & {\boldmath $-0.076\pm0.006$} \\
RIA freezing score & $0.429\pm0.009$ & $0.990\pm0.003$ & {\boldmath $-0.073\pm0.009$} \\
Wanda freezing score & $0.458\pm0.011$ & $0.987\pm0.003$ & {\boldmath $-0.044\pm0.005$} \\
SSU columns & $0.487\pm0.007$ & $0.988\pm0.004$ & $-0.015\pm0.006$ \\
\textbf{Drift-value} & $0.502\pm0.007$ & $0.990\pm0.004$ & reference \\
Empirical Fisher & $0.511\pm0.006$ & $0.989\pm0.003$ & $+0.009\pm0.007$ \\
\bottomrule
\end{tabular*}
\par\smallskip
{\footnotesize Bold differences exceed $\max(0.031,2\,\mathrm{SE})$ in absolute value.}

%% file: sections/limitations.tex
\section{When does drift-value selection improve retention?}
\label{sec:tools}
\label{sec:cut}

We compare drift-value selection with freezing a random set of the same
size. Two quantities help predict this benefit: score concentration and
the retention loss left by attention-frozen fine-tuning.

\paragraph{Screening by score concentration.}
Under the additive model, selecting fraction $f$ of entries that hold
fraction $q_f$ of the drift-value predicts a gain proportional to $q_f-f$
(Appendix~\ref{app:bound}). A third of feed-forward entries holds
$0.89$--$0.96$ of that value and selection gains $0.0761\pm0.0062$ in the
main setting. Attention-head scores are less concentrated
and the tested head selection has no resolved gain. The controls indicate that much of the benefit comes from
excluding the low-scoring tail.

\paragraph{The empirical headroom rule.}
Let $R_0$ be pretrained retention and $R_A$ retention after fine-tuning
with attention frozen. Define remaining retention loss as $L=R_0-R_A$.
Let $R_f$ and $R_\pi$ denote retention with an additional random or
drift-value-selected mask, each freezing fraction $f$.
Across the evaluated configurations,
\begin{equation}
R_f-R_A\approx fL,\qquad R_\pi-R_f\approx kL.
\label{eq:headroom}
\end{equation}
The first relation estimates the gain from a random mask, the second the
extra gain from drift-value. We fit $k\approx0.32$ on five combinations
of model, learning rate and precision at $f\approx0.352$.
An accuracy loss of ten percentage points after attention freezing
therefore predicts 3.2 points more retention than a random mask.
Across fifteen configurations, gains range from $0.20L$ to $0.46L$ (Appendix~\ref{app:headroom}).

Ten configurations were excluded from the fit. On Qwen3-4B and Qwen3-8B,
the measured gains are $0.43L$ and $0.32L$.
An attention-frozen run therefore gives a rough estimate of whether
selection is worth testing. The predicted gain reaches our minimum
decision threshold of $0.031$ near $L=0.1$.
It is an empirical guide: a resolved gain must still exceed twice its
paired SE (Appendix Figure~\ref{fig:headroom}).

\paragraph{Selection gains across models.}
With attention frozen, we recompute drift-value for each model and
compare it with random selection at the same $35.2\%$ entry budget.
Mean retention is higher with drift-value on all six models over nine
paired seeds (Appendix Table~\ref{tab:transfer-summary}). Gains on Qwen3-4B and Qwen3-8B
are $0.1517\pm0.0206$ and $0.0989\pm0.0259$.

\section{Scope and limitations}
\label{sec:use}

The main CNN test transfers a CIFAR-100-trained VGG-8 to SVHN digits.
The language tests teach $150$ invented facts, recalled closed-book, and
read retention as the worst of three multiple-choice benchmarks. The vision
test grafts point clouds onto one backbone.
Drift-value assumes local, independent, equal-scale updates and simplified downstream gains, and neither the score nor the headroom rule guarantees task accuracy.
At 40 epochs, drift-value and Fisher differences stay inside the decision threshold on six models.
The edge over Fisher at 160 epochs holds on Qwen2.5-1.5B. Three other tested settings remain inside the threshold.
In language models, removal-value, DEFT and random unit cores do not separate at $40\%$ frozen
(Appendix Table~\ref{tab:unit-policies}).
Among the 1.5--1.7B models, only Qwen2.5-1.5B gives a LoRA comparison at the same backbone precision and similar acquisition: on Qwen3-1.7B LoRA acquires $0.1637$ of the facts, and on SmolLM2-1.7B the policy acquires $0.8896$ against LoRA's $0.9859$. SSU, Wanda and RIA enter as scores under one
attention-frozen protocol.
At $35.2\%$ frozen, a seventh model, SmolLM2-360M, retains $0.538$
with drift-value versus $0.529$ with a random mask, with no resolved
gain and slightly lower acquisition.

%% file: sections/related.tex
\section{Related work}
\label{sec:related}

\paragraph{Selective adaptation and continual learning.}
SSU \citep{ssu2025}, AWARe \citep{aware2026}, Super-Tuning
\citep{supertuning2026} and Model-Dowser \citep{modeldowser2026} select
parameters from calibration or self-generated probes.
Model-Dowser bounds an update's output shift by Jacobian sensitivity,
update size (taken as weight magnitude) and input activity, using backward
passes on synthetic samples. MAS \citep{mas2018} scores label-free output
sensitivity by backward passes, and its local variant multiplies ReLU
input and output activations. Drift-value assumes equal-scale updates, so
weight magnitude drops out, and uses forward second moments and a gain
through the next projection. S2FT \citep{s2ft2024} selects heads and
channels, at random or by activation, weight or gradient scores. AWARe
trains attention and freezes feed-forward blocks, the reverse of our
choice, and reports that tuning them costs retention in its setting.
Importance scores for continual learning and sparse fine-tuning use
gradients or trajectories \citep{ewc2017,si2017,fish2021}. Neuron-freezing methods protect
subsets chosen by mean activation \citep{golkar2019}. HOPE \citep{hope2026} supplies capacity, removal cost,
DEFT and isolation. Equation~\ref{eq:phi}'s ratio also appears in the
single-major-player value of an interior oceanic game \citep{milnorshapley1978}.

\paragraph{Pruning scores and allocation.}
Wanda \citep{wanda2023}, FLAP \citep{flap2024}, LLM-Pruner
\citep{llmpruner2023} and SparseGPT \citep{sparsegpt2023} score weights or
structures for pruning. Within a layer, the L2 norm of a unit's Wanda
scores ranks units as capacity does. FLAP's similarity is empirical.
Allocation methods include uniform sparsity \citep{gale2019sparsity}, OWL
\citep{owl2023}, LAMP \citep{lamp2021}, FLAP's global threshold, windowed Shapley
\citep{shapleynonuniform2025}, fixed-shape masks \citep{shearedllama2023},
and whole-block selection \citep{shortgpt2024,sleb2024,gromov2024}.

%% file: sections/conclusion.tex
\section{Conclusion}
\label{sec:discussion}

What to freeze depends on the paths through which adaptation can change
existing computation. When the intervention cuts the paths from trainable
neurons into the frozen core, removal-value selects the computation worth
protecting and improves on DEFT in VGG-8 at similar acquisition.
When the paths remain open, drift-value uses one unlabeled forward
calibration pass. At 40 epochs on six language models, it leads adapted Wanda and RIA scores where attention freezing leaves substantial retention loss, while its differences from empirical Fisher remain unresolved. It exceeds static Fisher on Qwen2.5-1.5B after 160 epochs.
The headroom rule uses the loss after attention freezing to estimate drift-value's retention gain over a random mask of the same size.
We call this policy Guarded Freezing: identify the remaining paths, choose the score that matches the connectivity, and evaluate retention and
acquisition together.

%% file: appendix/appendix.tex
\section{Removal-value: definition and approximation}
\label{app:proofs}

This section explains how removal-value is obtained from HOPE's removal
cost. It separates the exact average over removal orders from the
closed-form approximation used in the experiments.

\subsection{The removal cost}

We use a cooperative-game view to assign importance to individual units.
The units are the players, and a coalition is a set of units that remain
in the layer. Removing one player incurs a cost that depends on which
others remain. Averaging this cost over removal orders gives that unit's
value. Here, ``capacity'' means HOPE's activation-based unit score
(Equation~\ref{eq:capacity}), not the model's parameter count.

Consider a layer of $\Nn\geq2$ units with positive capacities
$\cpc_1,\ldots,\cpc_{\Nn}$ and total $\Ecap=\sum_j\cpc_j$.
After removing a set $S$, let $\coal$ be the surviving units,
$n_\coal=|\coal|$, and $\Ecap_\coal=\sum_{j\in\coal}\cpc_j$.
HOPE's cost of removing unit $\nrn\in\coal$ is
\begin{equation}
\varphi(\nrn\mid S)=\frac{n_\coal\,\cpcn}{\Ecap_\coal-\cpcn}.
\label{eq:marginal}
\end{equation}
We average this cost over uniformly random removal orders, conditional
on the unit not being last in its layer. The final removal is excluded
because its denominator is zero. A minimum surviving width used in
pruning is a separate allocation constraint.

Different removal orders can accumulate different total costs, even
when they remove the same set of units. Thus these costs need not be
marginal changes of a total-cost function defined on sets. We use their
random-order average directly.

\subsection{The closed-form approximation}

The exact nonterminal average is
\begin{equation}
V_{\nrn}=\frac{1}{\Nn-1}\sum_{k=0}^{\Nn-2}(\Nn-k)\,\cpcn\;
      \mathbb{E}_k\!\left[\frac{1}{\Ecap-C_S-\cpcn}\right],
\label{eq:exact}
\end{equation}
where $\mathbb E_k$ averages over uniform $k$-element subsets of the
other units and $C_S=\sum_{j\in S}\cpc_j$.
The mean-field approximation replaces the remaining capacity by its mean,
$(\Ecap-\cpcn)(\Nn-1-k)/(\Nn-1)$. This gives
\[
V_{\nrn}^{\mathrm{MF}}
=\frac{\cpcn}{\Ecap-\cpcn}(\Nn-1+H_{\Nn-1})
=\Phin\left(1+\frac{H_{\Nn-1}-1}{\Nn}\right),
\]
where $H_m=\sum_{j=1}^{m}1/j$ and
$\Phin=\Nn\cpcn/(\Ecap-\cpcn)$ is Equation~\ref{eq:phi}.
The relative finite-width correction within this approximation is
$O(\log\Nn/\Nn)$. This does not bound the error from replacing the
reciprocal capacity by the reciprocal of its mean. That step is least
reliable near the end of an order, when few units remain. Even if no unit carries a large share of total capacity, this error
need not be small.

The direction of the approximation error follows from convexity.
For positive remaining capacity $X$, Jensen's inequality gives
$\mathbb E[1/X]\geq1/\mathbb E[X]$. Applying it at each removal
position yields $V_{\nrn}\geq V_{\nrn}^{\mathrm{MF}}\geq\Phin$.
Thus the closed form is a lower approximation to the order average,
without a general relative-error guarantee. Its monotonicity and scale
invariance are exact.

\subsection{Comparison with DEFT}

DEFT records each unit's cost at the step where one greedy removal path
prunes it (\citealp{hope2026}, Section 11.2.2, Equation 24). Each pruning
step within a layer removes its lowest-capacity active unit. If $n$ units remain before
removal, its cost is $n/(n-1)$ times the removed unit's capacity divided
by the surviving units' mean capacity. The random-order average instead
samples many survivor sets. Removal-value evaluates the same cost at
the intact layer as an approximation to that average.

\subsection{Grouping whole layers leaves the average unchanged}

We next ask whether grouping removals by layer changes the value of a
unit. Let $U_{\layer}$ be the units in layer $\layer$. The removal cost depends
only on removals in that layer:
\begin{equation}
\varphi(\nrn\mid S)=\varphi^{(\layer)}\!\big(\nrn\mid S\cap U_{\layer}\big),
\qquad \nrn\in U_{\layer}.
\tag{Sep}\label{eq:sep}
\end{equation}
A grouped order first orders the groups uniformly, then orders the units
uniformly within each group, keeping each group contiguous.

\begin{theorem}[Invariance under grouping whole layers]
Under \eqref{eq:sep}, suppose each layer lies entirely in one group of a
partition $\{B_1,\ldots,B_m\}$. The grouped random-order average
$\mathrm{Ow}_{\nrn}$, the unrestricted average $\mathrm W_{\nrn}$, and the
average within the unit's layer are equal:
\[
\mathrm{Ow}_{\nrn}\big(\varphi;\{B_1,\ldots,B_m\}\big)
=\mathrm W_{\nrn}(\varphi)
=\mathrm W_{\nrn}\big(\varphi^{(\layer)}\big).
\]
All averages use the same nonterminal-position convention.
\end{theorem}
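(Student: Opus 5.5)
The argument rests on one observation: under \eqref{eq:sep}, the cost charged to unit $\nrn\in U_{\layer}$ depends only on the \emph{relative order} in which the units of $U_{\layer}$ are removed. Concretely, fix a global removal order $\sigma$ of all units, and let $S_\sigma(\nrn)$ be the set removed before $\nrn$. By \eqref{eq:sep},
\[
\varphi(\nrn\mid S_\sigma(\nrn))=\varphi^{(\layer)}\big(\nrn\mid S_\sigma(\nrn)\cap U_{\layer}\big).
\]
The set $S_\sigma(\nrn)\cap U_{\layer}$ is exactly the set of layer-mates preceding $\nrn$ in the restriction $\sigma|_{U_{\layer}}$. The nonterminal convention also refers to the layer alone, since ``last in its layer'' is a property of $\sigma|_{U_{\layer}}$. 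Hence both the integrand and the conditioning event are functions of $\sigma|_{U_{\layer}}$ only. Each of the three averages is therefore the expectation of one fixed function $g(\tau)$, $\tau=\sigma|_{U_{\layer}}$, conditioned on $\nrn$ not being last in $\tau$.

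With that observation, it suffices to show that the induced law of $\tau$ is uniform on orderings of $U_{\layer}$ in each case.
\begin{itemize}
\item For $\mathrm W_{\nrn}(\varphi^{(\layer)})$, $\tau$ is uniform by definition.
\item For $\mathrm W_{\nrn}(\varphi)$, $\sigma$ is uniform on all permutations. Restriction to a subset pushes the uniform measure to the uniform measure, because each ordering of $U_{\layer}$ has the same number of extensions, namely $|\text{all}|!/|U_{\layer}|!$.
\item For the grouped average, let $B_r\supseteq U_{\layer}$ be the group containing the layer. The grouped order places $B_r$ contiguously and orders its units uniformly within the block, independently of the group order. Then $\tau=\sigma|_{U_{\layer}}$ is the restriction of the uniform within-block order on $B_r$ to $U_{\layer}$, and is again uniform by the same counting argument.
\end{itemize}
Since the conditioning event has the same probability $(|U_{\layer}|-1)/|U_{\layer}|>0$ in all three cases (recall $\Nn\ge2$), the conditional expectations coincide.

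The main point needing care is the nonterminal convention. If ``last'' meant last overall, or last within the group, rather than last in the unit's layer, the conditioning event would no longer be a function of $\tau$, and the averages could differ. I will therefore state explicitly that the convention, as in Equation~\ref{eq:exact}, excludes positions where $\nrn$ is the final surviving unit of its layer, which is exactly where the denominator $\Ecap_\coal-\cpcn$ vanishes. With that reading, the event is $\tau$-measurable and the proof closes. The hypothesis that each layer lies entirely in one group is used only in the grouped case, to guarantee that the within-layer order is the restriction of one uniform within-block order rather than being split across independently ordered groups.
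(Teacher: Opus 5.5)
Your proposal is correct and takes the same route as the paper's proof. Under (Sep), both the cost and the nonterminal conditioning event depend only on the relative order of $U_{\layer}$, and that relative order is uniform in the unrestricted, grouped and within-layer cases. The grouped case uses the fact that the whole layer lies inside one uniformly ordered group. You add the explicit counting argument for why restriction preserves uniformity, and you spell out that ``nonterminal'' means not last in the unit's layer; the paper states both points more briefly.
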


\begin{proof}
By \eqref{eq:sep}, only the order of units in $U_{\layer}$ affects the
cost. In unrestricted orders this relative order is uniform. In grouped
orders it is also uniform, since all units of $U_{\layer}$ belong to the
same uniformly ordered group. Conditioning on the unit not being last
in its layer preserves the same distribution in both cases.
\end{proof}

\paragraph{Splitting a layer can change the average.}
Consider three units $i,j,k$ in one layer and the partition
$\{\{i\},\{j,k\}\}$. Give $i$ cost one when exactly one other unit
precedes it, and zero otherwise. Conditional on $i$ not being last, its
unrestricted average is $1/2$. Its grouped average is zero, because
$i$ can only be first or last. This counterexample shows why the theorem
requires whole layers. It does not imply that every split changes values.

An empirical check on Qwen3-1.7B groups units carrying at least $1\%$
of layer capacity separately from the rest. If none qualifies, it groups
the largest unit separately. Unlike Equation~\ref{eq:exact}, this check
stops each order with at least $5\%$ of the layer remaining and averages each
unit's cost only over draws in which it is removed. With 4000 draws,
the median absolute relative change from unrestricted ordering is
$0.0011$, versus $0.0002$ between independent unrestricted estimates.
These are medians of within-layer medians. The median within-layer
Spearman correlation with removal-value exceeds $0.999999$.
These findings concern the sampled layers and this stopping rule, not
general invariance under splitting a layer.

\begin{corollary}[Within-layer ranking]
At fixed $\Nn$ and $\Ecap$, removal-value is strictly increasing in
$\cpcn$, so it ranks units exactly as capacity does. This statement is
about the closed form. The finite-width ranking of the exact order
average is a separate question.
\end{corollary}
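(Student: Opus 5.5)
We show that $\Phin=\Nn\cpcn/(\Ecap-\cpcn)$ (Equation~\ref{eq:phi}) is a strictly increasing function of $\cpcn$, which gives the ranking claim at once. The point to handle is what ``fixed $\Ecap$'' means, since $\Ecap=\sum_j\cpc_j$ contains $\cpcn$ itself. There are two readings, and we treat both.

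\emph{Reading (a): comparing two units of the same layer.} Then $\Nn$ and $\Ecap$ are literally shared. Define $g(c)=\Nn c/(\Ecap-c)$ on $0<c<\Ecap$. Every unit's capacity lies in this interval, because capacities are positive and $\Nn\ge2$, so $\cpcn<\Ecap$. Differentiating gives
\[
g'(c)=\frac{\Nn\Ecap}{(\Ecap-c)^2}>0 .
\]
Alternatively, for $c<c'$ we can check directly that $c(\Ecap-c')<c'(\Ecap-c)$, since this reduces to $c\Ecap<c'\Ecap$. Either way, $\cpcn<\cpc_j$ if and only if $\Phin<\Phi_j$, and equal capacities give equal scores. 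So the ordering by $\Phi$ within a layer coincides with the ordering by capacity, ties included.

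\emph{Reading (b): varying $\cpcn$ with the other units held fixed.} Then $\Ecap$ moves with $\cpcn$. Write $\Ecap-\cpcn=\sum_{j\ne\nrn}\cpc_j=:R$, which does not depend on $\cpcn$. This gives $\Phin=\Nn\cpcn/R$, which is linear in $\cpcn$ with positive slope and therefore also strictly increasing. Stating this reading explicitly prevents anyone from misreading the claim as depending on which of the two is intended.

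Finally, we note why only the closed form is covered. The exact average $V_\nrn$ in Equation~\ref{eq:exact} depends on the whole capacity profile through $\mathbb E_k[1/(\Ecap-C_S-\cpcn)]$. Monotonicity of $V_\nrn$ in $\cpcn$ does not follow from the argument above, which matches the corollary's disclaimer. The same remark applies to $V^{\mathrm{MF}}_\nrn$, which is $\Phin$ multiplied by a positive factor depending only on $\Nn$, so it inherits the same ranking.
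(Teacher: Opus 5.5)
Your reading (a) is the paper's proof: differentiating $\Nn c/(\Ecap-c)$ gives $\Nn\Ecap/(\Ecap-c)^2>0$ on $0<c<\Ecap$, so the closed form preserves the capacity order within a layer. Your direct algebraic check, reading (b), and the remark that $V^{\mathrm{MF}}_{\nrn}$ is a positive $\Nn$-dependent multiple of $\Phin$ are all correct, but the stated claim does not need them.
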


\begin{proof}
The derivative of $\Nn c/(\Ecap-c)$ is
$\Nn\Ecap/(\Ecap-c)^2>0$ for $0<c<\Ecap$.
\end{proof}

\begin{corollary}[Layer-local information]
Under uniform removal orders, the exact value of a unit depends only on
capacities in its own layer. The closed-form approximation uses only
$\cpcn$, $\Nn$, and $\Ecap$ and is unchanged by rescaling all capacities
in that layer.
\end{corollary}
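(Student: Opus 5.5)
The corollary has two parts, and I will prove them separately. The first concerns the exact order average $V_{\nrn}$ of Equation~\ref{eq:exact}. The second concerns the closed form $\Phin$ of Equation~\ref{eq:phi}.

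\textbf{Part 1: the exact value is layer-local.} I will invoke the grouping theorem just proved. By \eqref{eq:sep}, the removal cost $\varphi(\nrn\mid S)$ of Equation~\ref{eq:marginal} depends on $S$ only through $S\cap U_{\layer}$. Its only inputs are then $n_\coal$, $\cpcn$ and $\Ecap_\coal$, and all three are computed from units in $U_{\layer}$. The theorem gives $\mathrm W_{\nrn}(\varphi)=\mathrm W_{\nrn}(\varphi^{(\layer)})$. The right-hand side is a uniform average over orders of $U_{\layer}$ alone, conditioned on $\nrn$ not being last in its layer. Its integrand depends only on the capacities $\{\cpc_j\}_{j\in U_{\layer}}$, so the value depends on nothing outside the layer.

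I would also state explicitly that Equation~\ref{eq:exact} is this within-layer average. The position $k$ of $\nrn$ among the nonterminal positions is uniform on $\{0,\dots,\Nn-2\}$, which explains the factor $1/(\Nn-1)$. Given $k$, the set of predecessors is a uniform $k$-subset of the other units. Writing the expression out this way makes the layer-locality visible directly, without needing the theorem.

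\textbf{Part 2: the closed form and its scale invariance.} The formula $\Phin=\Nn\cpcn/(\Ecap-\cpcn)$ uses only $\cpcn$, $\Nn$, and $\Ecap=\sum_{j\in U_{\layer}}\cpc_j$. For scale invariance, replace every $\cpc_j$ in the layer by $\lambda\cpc_j$ with $\lambda>0$. Then $\Ecap$ becomes $\lambda\Ecap$, and
\[
\frac{\Nn\lambda\cpcn}{\lambda\Ecap-\lambda\cpcn}=\Phin .
\]
The same cancellation applies termwise inside the expectation in Equation~\ref{eq:exact}, because each summand is homogeneous of degree zero in the capacities. So $V_{\nrn}$ is scale invariant as well, although the statement only requires this for the closed form.

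\textbf{Main obstacle.} The only delicate point is the nonterminal conditioning: the event that $\nrn$ is not last must be determined within the layer. It is, because "last in its layer" depends only on the relative order within $U_{\layer}$. The grouping theorem's proof already handles exactly this point, so I would cite it and not re-derive it.
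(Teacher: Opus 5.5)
Your proof is correct and follows the paper's reasoning. The paper gives no separate proof for this corollary. It treats the first claim as immediate from \eqref{eq:sep} together with the grouping theorem's equality $\mathrm W_{\nrn}(\varphi)=\mathrm W_{\nrn}(\varphi^{(\layer)})$, and the second as the observation that $\Nn\cpcn/(\Ecap-\cpcn)$ is homogeneous of degree zero in the layer's capacities. Your additional remarks are also correct: Equation~\ref{eq:exact} is visibly the within-layer average, and it is itself scale invariant.
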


Neither score models interactions between layers. The exact average can
depend on the full within-layer capacity distribution. The pruning
control in Appendix~\ref{app:pruning-control} therefore tests unit
ranking separately from allocation across layers.

\section{Data required by each score}
\label{app:ladder}

The scores differ in what information they need. This section distinguishes
calibration before training from measurements that require trained models,
so their computational requirements are explicit.

Appendix Table~\ref{tab:data} lists these requirements. Calibration-size
and random-token controls appear in Appendix~\ref{app:additional-controls}.

\begin{table}[!htbp]
\centering
\footnotesize
\caption{\textbf{Removal-value and drift-value need only forward calibration.}
Fisher also needs backward passes. The headroom rule needs a fitted slope
and an attention-frozen run. The shaded column lists required data.}

\label{tab:data}
\begin{tabularx}{\textwidth}{@{}>{\raggedright\arraybackslash}p{0.22\textwidth}>{\columncolor{paperrow}\raggedright\arraybackslash}X>{\raggedright\arraybackslash}X@{}}
\toprule
Quantity & Required data & Interpretation \\
\midrule
Rounding-induced freeze & Weights, learning rate and format & Estimates rounding-limited updates \\
\textbf{Removal-value} & One forward calibration pass & Ranks units by contribution size \\
\textbf{Drift-value} & One forward calibration pass & Estimates local update disturbance \\
\textbf{Vision drift-value} & One pass over the 256 \textbf{removal-value} calibration images & Estimates local update disturbance \\
Remaining retention loss & Pretrained evaluation and an attention-frozen reference run & Input to the fitted headroom rule \\
Drift projection shares & Fine-tuned checkpoints & Measures the contribution of each writer family \\
Empirical Fisher & One backward pass per calibration prompt, next tokens as targets & Ranks weights by mean squared prompt-loss gradient \\
\bottomrule
\end{tabularx}
\end{table}

\FloatBarrier
\section{Pruning and expansion protocols}
\label{app:settings}

Pruning removes units, while an annex adds trainable units to a frozen
backbone. These controls test the removal and expansion sides of the
paper's argument. Their protocols differ from the freezing experiments.

\paragraph{Pruning models and precision.}
The eight dense language models are Qwen3-0.6B, 1.7B, 4B, 8B and 32B,
Mistral-7B v0.1, SmolLM2-360M (the language backbone of SmolVLM-500M),
and OLMo-2-7B (November 2024 release). Qwen3-0.6B and Qwen3-1.7B use
float32. The other six models use bf16. Comparisons within a model use the
same precision. Adaptation precisions are specified separately in
Appendix~\ref{app:cells}.

\paragraph{Pruning calibration and evaluation.}
The default calibration is one forward pass over 16 blocks of 1024 tokens
from GSM8K \citep{gsm8k2021}, retaining all positions. Mistral-7B instead uses 12 short built-in prompts from mixed
domains, retaining all 365 token positions. A calibration-domain control uses approximately equal
token shares from maths, open-domain questions, and encyclopedic prose.
Evaluation draws 500 valid items per task and seed from HellaSwag's
validation split and the ARC-Easy and SciQ test splits. We select the
continuation with the highest mean token log-likelihood. HellaSwag uses
its context and candidate endings. ARC-Easy and SciQ use the question
followed by ``Answer:'' and candidate answers, without supporting text.
Language adaptation uses the same scoring protocol with
200 items per task (Appendix~\ref{app:cells}).

\paragraph{Allocation and repair.}
The pruning control compares uniform retained fractions with allocation
by removal-value. A separate repair experiment allocates widths using
measured loss from removing a whole feed-forward block, evaluated on
200 items per task. Its minimum retained fractions are $2\%$ per layer
and $15\%$ in the first three layers. Repair distils the uncut model's
output distribution into the pruned model on 10 million tokens per
round from the maths and encyclopedic mixture, with sequence length
2048, batch size 4, and initial learning rate $10^{-4}$.

\paragraph{Vision expansion and evaluation.}
The annex adds $10\%$ to each feed-forward layer's width, initialized to
preserve the backbone's output. We measure image retention on CIFAR-100
and point-cloud acquisition on ModelNet40 using $k$-nearest-neighbor
accuracy with $k=20$, 2000 reference embeddings, and 1000 query
embeddings. Appendix~\ref{app:cells} specifies training. The VGG-8
connectivity experiment is a separate protocol (Section~\ref{sec:wiring}).

\section{Transformer experiment configurations}
\label{app:cells}

The freezing experiments ask which parameters can remain fixed while a
model learns a new task. This section defines the language and vision
tasks, evaluation criteria, masks, and common baseline implementations.

\paragraph{Language adaptation.}
We train on 150 of 200 invented facts for 40 epochs, with batch size 8
and learning rate $10^{-4}$ unless stated otherwise. Each fact assigns a
synthetic entity a year, founder, place or count, expressed through four
declarative templates. Evaluation uses a question template and ranks the
correct continuation against nine same-kind distractors by mean token
log-likelihood. Acquisition is accuracy on the 150 trained facts. The
remaining 50 are untrained leakage controls. Language adaptation uses
AdamW with $(\beta_1,\beta_2)=(0.9,0.999)$, $\epsilon=10^{-8}$,
weight decay $0.01$, and a constant learning rate. Training sequences are
truncated to 64 tokens.
Retention is the minimum accuracy over HellaSwag, ARC-Easy and SciQ,
with 200 items per task and chance accuracy approximately $0.25$. ARC-Easy contains a few items with other than four choices. A mean retention
within $0.05$ of chance is classified as near chance. Comparisons use
nine paired seeds unless stated otherwise, with eighteen in the extended
SmolLM2 comparison. A difference is resolved when its magnitude exceeds
$\max(0.031,2\,\mathrm{SE})$, using the paired standard error.
The main models are Qwen2.5-1.5B \citep{qwen25_2024}, Qwen3-0.6B,
1.7B, 4B and 8B \citep{qwen3_2025}, and SmolLM2-1.7B
\citep{smollm2_2025}. Additional controls use SmolLM2-360M.

\paragraph{Learning rates.}
The main rate is $10^{-4}$. Lower-rate controls use $5\times10^{-5}$
on Qwen3-1.7B and $2\times10^{-5}$ on Qwen3-0.6B. A higher-rate
SmolLM2-1.7B control uses $2\times10^{-4}$.

\paragraph{Numerical precision and exact freezing.}
We compare ordinary bf16, bf16 with stochastic rounding, bf16 with
Kahan compensation, and float32. Ordinary bf16 discards sufficiently
small updates when rounding to nearest. Appendix~\ref{app:precision}
estimates the resulting frozen fraction. Stochastic rounding preserves
small updates in expectation \citep{gupta2015,zamirai2020}. Kahan
compensation accumulates rounding residuals in a buffer \citep{zamirai2020}.
Explicit masks zero frozen-entry updates, keep their compensation
buffers at zero where applicable, and use snapshot restoration for
float32 optimizer steps. These mechanisms also prevent weight decay
from changing frozen entries. We verify that explicitly frozen entries
are bit-identical before and after training.

\paragraph{The attention freeze.}
Freezing attention holds every attention parameter of every block: the
query, key, value and output projections, the query and key norms where
the model has them, and the attention biases where the model has them.
Freezing a key-value group holds the query rows and output columns of its
query heads, its own key and value rows, and the layer's query and key
norms. The block normalization gains train unless a configuration says
they are frozen.

\paragraph{Entry masks and calibration.}
A \emph{uniform-budget} mask freezes the same fraction of each
feed-forward matrix, usually $35.2\%$. A \emph{count-matched} mask
instead uses the number of entries above the magnitude threshold in
that matrix (Appendix~\ref{app:precision}). These counts can differ
between matrices. Every selected mask and its random control freeze
identical counts in each matrix. Other tested budgets are $20\%$ and
$50\%$, identified in the corresponding tables.
Drift-value freezes the highest-scoring entries, breaking ties by index.
Random masks sample uniformly using the run's seed.

The default calibration uses one float32 forward pass over 12 English
prompts, each truncated to 512 tokens. Excluding the first four positions
per prompt leaves 268 positions on Qwen3-1.7B, also used to compute
removal-value. The token count depends on the model's tokenizer.
The random-token control replaces token ids uniformly over the vocabulary
while preserving prompt lengths.

\paragraph{The compared scores.}
For a feed-forward matrix $y=Wx$, column $j$ contains all weights
$W_{ij}$ multiplying input feature $x_j$. Let $m_j=\mathbb E[x_j^2]$
be its second moment over calibration positions. The Wanda freezing
score is $|W_{ij}|\sqrt{m_j}$ \citep{wanda2023}.
The RIA freezing score is
\[
\left(\frac{|W_{ij}|}{\sum_k|W_{kj}|}
      +\frac{|W_{ij}|}{\sum_k|W_{ik}|}\right)m_j^{1/4},
\]
which scales each weight by its magnitude relative to its column and row
\citep{ria2024}. Both rules freeze the highest-scoring entries in each
matrix, ranked across the whole matrix as in Super-Tuning's layerwise
selection, rather than within each output row as Wanda prunes. SSU instead sums the Wanda scores within each input column,
$S_j=\sum_i |W_{ij}|\sqrt{m_j}$, and freezes the highest-scoring columns
whole, following SSU's formal column definition (its Section 3.2)
\citep{ssu2025}. For a target fraction $f$ and $d$ input columns,
it freezes $\operatorname{round}(fd)$ columns. Thus its achieved fraction
can differ slightly from the entry-wise rules.
The main Qwen3-1.7B comparison freezes $35.2051\%$ with SSU and
$35.2\%$ with the entry-wise rules.
These are score comparisons under our common attention-frozen training
protocol. The Wanda row is motivated by Super-Tuning's reuse of a pruning
score for selective fine-tuning \citep{supertuning2026}, and does not
represent a separate implementation of its full procedure.

\paragraph{The empirical Fisher baseline.}
For each of the $P$ calibration prompts, let $\mathcal L_p$ be the mean
next-token cross-entropy loss, with the first four target positions
excluded. The baseline score is
\[
F_{ij}^{\mathrm{prompt}}=\frac{1}{P}\sum_{p=1}^{P}
\left(\frac{\partial\mathcal L_p}{\partial W_{ij}}\right)^2.
\]
We take one backward pass per prompt, square each weight gradient, then
average over prompts. The highest-scoring entries are frozen at the same
count per matrix as drift-value. All rows and curves labeled ``Fisher''
or ``Empirical Fisher'' use this diagonal score unless explicitly labeled
as a variant. The factored variant is a separate control in
Appendix~\ref{app:drift}. No comparison computes a full Fisher matrix.

\paragraph{LoRA configuration.}
LoRA adapts the query, key, value, output, gate, up and down projections
in every block, keeping the backbone fixed. It uses zero dropout and
$\alpha=2r$, where $r$ is the rank. Ranks are 100 for Qwen2.5-1.5B,
42, 97, 130 and 199 for Qwen3-0.6B, 1.7B, 4B and 8B, respectively,
and 107 for SmolLM2-1.7B. They match the added-parameter budget of a
$10\%$ feed-forward expansion, rather than the trainable-parameter
count of entry freezing. Learning rates and acquisition differences are
reported in Appendix Table~\ref{tab:lora}.

\paragraph{Half fine-tuning.}
Our HFT control \citep{hft2024} freezes two of the three feed-forward
matrices in a random half of the blocks and one in the rest. This
matches a $50\%$ aggregate feed-forward budget. All attention parameters
remain frozen under the common comparison protocol.

\paragraph{Vision adaptation.}
DINOv3 ViT-S+ (29M parameters) \citep{dinov3_2025} learns ModelNet40
point clouds through a trainable tokenizer. Each cloud holds 1024 points
drawn uniformly at random from the 8192-point resampled release of the
ModelNet40 meshes \citep{modelnet2015}, centred and scaled to the unit
sphere, with the standard split of 9843 training and 2468 test objects. It groups 32 neighboring points
around each of 64 farthest-point-sampled centroids, pools each neighborhood with a shared
network, and projects it to the embedding width with a learned centroid
embedding. The tokenizer trains under every policy.
The parameter-matched adapter is one bottleneck adapter
\citep{houlsby2019adapters} after each feed-forward block, with its width
set per layer to match the parameters the annex adds there. Houlsby et al.
also place a second adapter after attention, and they keep one adapter
per task. Here a single adapter serves both modalities and stays active
when images are evaluated, so retention is measured on the adapted model.
Image retention and point-cloud acquisition use the $k$-nearest-neighbor
protocol in Appendix~\ref{app:settings}. Image chance accuracy is $0.01$.
Training minimizes cross-entropy through a learned classification head,
which is discarded for the $k$-nearest-neighbor evaluation.
Both protocols use batch size 64 and AdamW with weight decay $0.01$,
$(\beta_1,\beta_2)=(0.9,0.999)$ and $\epsilon=10^{-8}$.
Rates warm up linearly from $1\%$ of their peak over 50 steps, then
decay to zero on a cosine schedule. Standard training uses 10 epochs
and peak backbone rate $10^{-4}$. The stress protocol uses 30 epochs
and peak backbone rate $10^{-3}$. The tokenizer and classification head
use peak rate $10^{-3}$ in both protocols.
The stress protocol's main attention-only, random-mask and drift-value arms use
nine seeds. Additional controls use four, as listed in Appendix Table~\ref{tab:vision-budget}. Differences are paired by seed and use
$\max(0.031,2\,\mathrm{SE})$ as the decision threshold.

Attention freezing includes the query, key, value and output projections,
their preceding normalization and their layer scale in each block.
Entry masks freeze $20\%$, $35.2\%$ or $50\%$ of each feed-forward
matrix. Drift-value uses the same 256 calibration images as removal-value,
excluding class and register tokens. The random control matches its
counts. Training uses float32, with frozen entries restored after each
step. Feed-forward biases remain trainable.

\section{Rounding-induced freezing in bf16}
\label{app:precision}

Small updates can disappear when stored in bf16. This section explains
how that effect motivates the default frozen-entry budget, and why an
explicit mask is needed to compare selection rules independently of rounding.

\paragraph{An approximate magnitude threshold.}
With $p$ explicit mantissa bits, floating-point values in
$[2^n,2^{n+1})$ have spacing $2^{n-p}$. Away from binade boundaries,
rounding to nearest discards updates smaller than half this spacing
\citep{gupta2015,zamirai2020}. For a typical normalized optimizer step
of magnitude $\kappa\,\mathrm{lr}$, we estimate the threshold as
\begin{equation}
\tau=2^{\lceil\log_2(\kappa\,\mathrm{lr})\rceil+p+1}.
\label{eq:tau}
\end{equation}
Entries with $|w|\geq\tau$ are likely to remain unchanged at that step
size. This is an approximation, with exceptions at rounding ties and
binade boundaries. For bf16 ($p=7$) and $\kappa=1$, $\tau$ is $0.03125$
at learning rate $10^{-4}$, $0.015625$ at $5\times10^{-5}$, and
$0.0078125$ at $2\times10^{-5}$. At $10^{-4}$, the corresponding
float16 threshold is $0.25$ and the float32 threshold is $2048$.

\paragraph{Estimating the frozen fraction.}
The estimate is the fraction of weights above $\tau$. It can be counted
directly or approximated from a fitted magnitude distribution.
For a zero-centered generalized Gaussian with standard deviation $\sigma$, shape
$\beta$ and scale
$\alpha=\sigma\sqrt{\Gamma(1/\beta)/\Gamma(3/\beta)}$, the fraction is
$Q(1/\beta,(\tau/\alpha)^\beta)$, where $Q$ is the regularized upper
incomplete gamma function. Gaussian and Laplace special cases give
$\mathrm{erfc}(\tau/(\sigma\sqrt2))$ and
$\exp(-\sqrt2\tau/\sigma)$.
We fit $\beta$ from the ratio of mean absolute weight to $\sigma$.
This describes the bulk around $\tau$, whereas kurtosis emphasizes tails.
Across Qwen3-1.7B's 196 attention and feed-forward matrices, the estimated
fraction ranges from $24.3\%$ to $48.0\%$, containing $76\%$ to $93\%$
of squared weight norm. The fitted approximation differs from direct
counts by at most $0.011$, compared with $0.10$ for Gaussian or Laplace
fits and $0.07$ for a kurtosis-based fit. Fitted shapes range from
$1.2$ to $1.8$. These fits use weights only, without training outcomes.

\paragraph{Scope of the approximation.}
Adam's actual normalized step depends on gradient history through its
first and second moments. An entry above $\tau$ can therefore move when
its step is larger than the assumed $\kappa\,\mathrm{lr}$. Equation~\ref{eq:tau}
does not guarantee that such entries stay frozen throughout training.
AdamW's decay term, $\mathrm{lr}\,\lambda w$, and the direction of the
combined update also affect rounding. Under SGD, the step magnitude is
proportional to the gradient, so a weight-only threshold does not suffice.
Explicit masks, in contrast, guarantee unchanged weights.

\paragraph{Budgets motivated by rounding.}
At $10^{-4}$ and $\kappa=1$, the threshold selects $35.2\%$ of
Qwen3-1.7B's feed-forward entries. We use this as the
default comparison budget, with uniform and count-matched variants
defined in Appendix~\ref{app:cells}. The same rule selects $63.2\%$ on
Qwen3-1.7B at $5\times10^{-5}$, $77.4\%$ on
SmolLM2-1.7B at $10^{-4}$, $75.7\%$ on Qwen3-0.6B at
$2\times10^{-5}$, and $23.2\%$ on the pretrained Qwen3-8B at $10^{-4}$.
Thus the default budget is a convention across models, rather than each
model's predicted rounding-induced fraction. During bf16 repair with a
decaying learning rate, the estimated fraction increases, reaching
$84.3\%$ at the rate used in the final fifth of the tested
schedule.

\paragraph{Explicit magnitude masks.}
Rounding motivates both a budget and a preference for large-magnitude
weights. Stochastic rounding allows us to test this preference explicitly.
The magnitude mask improves retention over full fine-tuning by
$0.043\pm0.011$, but its differences from random masks of the same
counts are unresolved: $+0.009\pm0.018$ without attention freezing and
$-0.017\pm0.014$ with it.

\section{Drift-value}
\label{app:drift}

A frozen unit can still change its output because upstream parameters
change its input. This section measures that disturbance and derives the
entry score used to reduce it. The measured decomposition and the
forward-only score serve different purposes.

\paragraph{Input changes at a frozen unit.}
Freezing a unit's weights does not fix its inputs. For a feed-forward
unit $\nrn$ in block $\layer$ of a pre-norm transformer, the gate
pre-activation at token $t$ is
\begin{equation}
a_{\nrn}(t)=s_t\langle w_{\nrn}\odot g,h_t\rangle,\qquad
h_t=e_t+\sum_{k<\layer}(A_k(t)+M_k(t))+A_{\layer}(t).
\label{eq:reader}
\end{equation}
Here $w_{\nrn}$ is the gate row, $g$ the normalization gain vector,
$s_t$ the inverse root mean square, and $e_t$ the embedding.
$A_k$ and $M_k$ are the attention and feed-forward outputs added to the
residual stream. Even with $w_{\nrn}$ fixed, all other terms can change.
We decompose the pre-activation change into contributions from these
upstream outputs, normalization, and embeddings. Stacking changes over
frozen units and held-out text positions gives the drift vector $D$.
The vector $D_f$ contains the contribution from a particular family,
such as attention. Only the complete decomposition sums exactly to $D$.

\paragraph{From drift to retention loss.}
A local loss expansion motivates using squared drift as a measure of
disturbance. This requires a near-stationary old-task loss and approximately
isotropic curvature in drift coordinates. Neither assumption establishes
a law for discrete accuracy. We therefore fit an empirical response model:
\begin{equation}
\text{retention loss}\approx a+b\rho,\qquad
\rho=(1-q_f)(1-f)^{-\lambda},
\label{eq:law}
\end{equation}
where $f$ is the frozen entry fraction and $q_f$ its captured drift-value
share. The dimensionless ratio $\rho$ models the remaining disturbance.
The exponent $\lambda$ allows the updates of unfrozen entries to change
with the budget. The implementation averages these ratios using the
blocks' measured drift contributions.

On Qwen3-1.7B with stochastic rounding at $10^{-4}$, the random-mask
budget ladder gives $a=0.058\pm0.017$ and $b=0.170\pm0.025$ at
$\lambda=-0.45$. The exponent is selected using the drift-value mask
outcomes, so this fit is descriptive rather than an independent test.
The intercept is a fitted residual loss, not a measurement of pathways
that no mask can protect.

\paragraph{Freezing a family of upstream outputs.}
If freezing family $f$ removes $D_f$ while all other contributions stay
fixed, the remaining energy satisfies
\begin{equation}
\|D-D_f\|^2=\|D\|^2(1-2s_f+e_f),\qquad
s_f=\frac{\langle D_f,D\rangle}{\|D\|^2},\quad
e_f=\frac{\|D_f\|^2}{\|D\|^2}.
\label{eq:family}
\end{equation}
The projection share $s_f$ measures alignment with total drift, while
$e_f$ measures the family's own energy. This identity is exact. Its use
as a freezing prediction assumes that the other contributions do not respond.
Across nine seeds, attention has mean projection share $0.491$, trainable
feed-forward units $0.274$, and the frozen core's changed outputs $0.243$.
Their sum ranges from $0.996$ to $1.027$. The omitted normalization
contribution is about $-0.008$, with gain and embedding contributions
at $10^{-6}$ or less in the measured configuration.

For attention, $2s_f-e_f=0.434$, predicting removal of $43.4\%$ of
drift energy under the fixed-contribution assumption. Experimentally,
freezing attention beside the core reduces its retention loss by
$68.1\%\pm7.8\%$ in bf16 and $57.1\%\pm4.8\%$ with stochastic
rounding. The corresponding accuracy gains are $0.075\pm0.015$ and
$0.152\pm0.016$, both above the decision threshold.
The larger reductions are consistent with downstream contributions
shrinking when attention changes less.

Projection share alone cannot guarantee a benefit: Cauchy--Schwarz gives
$s_f^2\leq e_f$, but cancellation allows $e_f>1$ and $2s_f-e_f<0$.
Independent, equal-energy attention groups would predict half the benefit
from freezing a random half. The measured gain is $0.036\pm0.014$,
or $0.47$ of the all-attention gain in bf16.

\paragraph{Scoring individual entries.}
For $y=Wx$, an entry update $\delta$ at $(u,j)$ changes the output by
$\delta x_j e_u$. We approximate updates as having equal magnitude and
independent signs. This removes cross terms in expected squared drift.
We also factor the input second moment from the downstream squared gain,
giving the entry score
\begin{equation}
\pi_{uj}=\mathbb E[x_j^2]r_u^2.
\label{eq:price}
\end{equation}
This factorization is an approximation when input and downstream gain
vary together. The score measures local sensitivity to an update,
without multiplying by the entry's current weight.

In a gated feed-forward block, let $G_u$ and $U_u$ denote gate and
up-projection pre-activations, and $z_u=\mathrm{silu}(G_u)U_u$.
The up-projection squared gain is
$\mathbb E[\mathrm{silu}(G_u)^2]\|W_{\mathrm{down},:u}\|^2$.
The gate-projection gain is
$\mathbb E[(\mathrm{silu}'(G_u)U_u)^2]\|W_{\mathrm{down},:u}\|^2$.
A down-projection entry uses $\mathbb E[z_j^2]$, identical across output
rows. The tested DINOv3 ViT-S+ also uses gated SwiGLU blocks, so
the same formulas apply. All these quantities come from one forward pass without labels
or backward gradients.

\paragraph{Which Fisher scores we compare.}
For a single use of $y=Wx$, the weight gradient is $g_u x_j$, where
$g_u=\partial\mathcal L/\partial y_u$.
For a prompt, the weight gradient sums contributions across positions.
Our main empirical Fisher baseline squares that prompt gradient and then
averages over prompts, the empirical diagonal Fisher of \citet{fish2021} (Appendix~\ref{app:cells}).
EWC \citep{ewc2017} uses the diagonal Fisher as a parameter importance. With attention frozen
and matched entry counts, drift-value minus this baseline is
$-0.0089\pm0.0068$ on Qwen3-1.7B, inside the decision threshold over nine
seeds.

The separate \emph{factored Fisher} control uses
$\mathbb E[g_u^2]\mathbb E[x_j^2]$, averaging each moment over calibration
positions. It still requires backward gradients. Drift-value replaces
the gradient moment with the forward gain $r_u^2$.
The factored control also differs from the main baseline in how it
aggregates positions, so this comparison does not isolate factorization
alone. Neither score uses the off-diagonal entries of a full Fisher matrix.
Factored minus prompt-based Fisher is $-0.0028\pm0.0055$ on Qwen3-1.7B
and $+0.0033\pm0.0073$ on Qwen2.5-1.5B, both unresolved.
The factored control beats a random mask of the same size by
$0.0822\pm0.0083$ and $0.0650\pm0.0105$, respectively.

\paragraph{Relation to cooperative games.}
Removal-value approximates the average cost of removing a unit from
HOPE's capacity-based game (Appendix~\ref{app:proofs}). The drift
decomposition defines a different game: each player is an upstream
output contribution $d_i$, and a coalition has cost
$\|\sum_{i\in S}d_i\|^2$. Its Shapley value, the mean marginal cost
over insertion orders, is $\langle d_i,D\rangle$. Summing over a
family gives $s_f\|D\|^2$. These values sum to total drift energy when
all contributions are included. The entry score in Equation~\ref{eq:price}
instead uses the additive approximation described above.

\section{The screening rule}
\label{app:bound}

Selection can help only if some entries matter more than others. This
section combines the spread of drift-value with measured training effects
to estimate the benefit of choosing entries over freezing them at random.

Let $f$ be the fraction of entries frozen and $q_f$ the fraction of
summed drift-value they contain. A random mask captures $f$ in expectation.
A linear response model therefore estimates the selection gain as
\begin{equation}
\text{gain over random}\approx(q_f-f)\times\text{scale}.
\label{eq:bound}
\end{equation}
For the set with the largest possible $q_f$, this is a ceiling within
the model. It is not a bound on measured accuracy. Calibration supplies
$q_f$, but the scale requires training comparisons: random versus no
entry mask for entries, random core versus full fine-tuning for units,
and attention-frozen versus core-only training for heads.
These scales are empirical and specific to a configuration.

\begin{table}[!htbp]
\centering
\footnotesize
\caption{\textbf{Only entry selection gives a resolved gain in these comparisons.}
Screening estimates for Qwen3-1.7B use calibration scores and measured
response scales. The measured column gives each tested policy's gain
over its random control, paired by seed. Estimates are model predictions,
not certified accuracy bounds.}

\label{tab:bound}
\small
\input{figures/table7_bound}
\end{table}

\paragraph{Entries, heads and units.}
Entry drift-values have coefficient of variation $6.7$. Freezing $35.2\%$
captures $88.6\%$ to $96.2\%$ of drift-value, depending on aggregation.
Four scale and aggregation variants estimate gains of $0.066$ to $0.089$,
compared with $0.077\pm0.011$ measured (Appendix Table~\ref{tab:bound}).
The variants use raw or block-weighted shares and scales with or without
a correction for whether normalization gains train.

The highest-scoring half of key-value groups captures only $61\%$ of
head drift-value, giving estimated ceilings of $0.009$ to $0.017$.
The tested policy scores each head by its attention mass on retained
calibration positions times its output-projection Frobenius norm. Scores
are summed within each key-value group, and the highest-scoring half of
the groups in each layer are frozen. In bf16, alongside a $40\%$
removal-value unit core, this policy gains $-0.003\pm0.012$ over random
group selection. It differs from the drift-value-maximizing set used to
compute the ceiling.
For units, the $40\%$ removal-value core captures $93\%$ of unit
drift-value. Scales of $0.054$ in bf16 and $0.097$ with stochastic
rounding give estimates of $0.029$ and $0.052$, compared with measured
gains of $0.022\pm0.020$ and $0.013\pm0.020$. Both remain unresolved.
The removal-value core need not maximize the captured drift-value share,
so these are estimates for that core rather than universal ceilings.

\paragraph{Limits of the screen.}
Interactions between entries, changes in unfrozen updates, and accuracy
near chance or the pretrained level can invalidate linear scaling.
The response model in Appendix~\ref{app:drift} includes one such update
correction through $\lambda=-0.45$. Mask structure also matters:
whole-unit cores at $40\%$ and $35\%$ retain $0.030$ and $0.045$ less
than that model predicts from their captured shares.

\section{The headroom rule}
\label{app:headroom}

The screening rule uses both calibration scores and fitted response scales.
The headroom rule asks a simpler practical question: can retention after
attention freezing predict the benefit of an additional selected entry mask?

Remaining retention loss is pretrained retention minus retention after
attention freezing. An additional entry mask can recover some of this loss.
The random-mask gain measures the benefit of freezing a random set of entries.
The selection gain measures the additional benefit of choosing those entries
by drift-value. Appendix Table~\ref{tab:headroom} reports both paired gains for
fifteen configurations across six models, three precisions and four rates,
with nine seeds per policy. Five fix the slope, and the other ten are
out of sample, six of them on a model or a rate the fit never saw.

\begin{table}[!htbp]
\centering
\caption{\textbf{Remaining retention loss predicts the scope for entry selection.}
Fifteen configurations, nine paired seeds per policy. Five configurations
fit the slope and ten test it. Random-mask gain is relative to attention
freezing alone. Selection gain is drift-value minus random.
Bold gains exceed $\max(0.031,2\,\mathrm{SE})$.}

\label{tab:headroom}
\small
\input{figures/table8_headroom}
\end{table}

\begin{figure}[!htbp]
\centering
\includegraphics[width=0.68\textwidth]{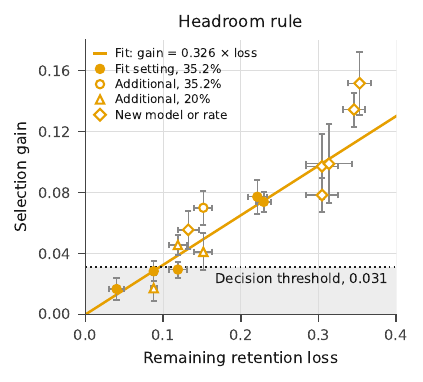}
\caption{\textbf{Remaining retention loss predicts selection benefit within the tested task.}
The line is fit to the five filled settings. Open markers show ten
additional settings. Circles freeze $35.2\%$ and triangles $20\%$.
Diamonds mark a model or rate absent from the fit at either fraction.
Error bars show one SE.
The dotted line is the minimum decision threshold, not a confidence bound.
The pale region falls below it.
Appendix Table~\ref{tab:headroom} identifies every setting.}
\label{fig:headroom}
\end{figure}

\paragraph{What is fitted.}
Let $L=R_0-R_A$, where $R_0$ is pretrained retention and $R_A$ is
retention after attention-frozen training. We fit
$R_{\mathrm{drift}}-R_{\mathrm{random}}=kL$ on five configurations at
$35.2\%$ frozen, marked in Appendix Table~\ref{tab:headroom}.
The result is $k=0.326\pm0.016$, with residual root mean square $0.005$
for gains ranging from $0.017$ to $0.077$. Allowing an intercept gives
$-0.001\pm0.006$, consistent with zero. A power-law fit performs worse.
On the same five configurations, random-mask gains have slope $0.36$.
A separate fit to the four $20\%$ configurations gives selection slope
$0.31$ and random-mask slope $0.21$. These random-mask slopes are close
to the corresponding frozen fractions.

\paragraph{Sensitivity to the fitted configurations.}
A through-origin fit gives more influence to larger values of $L$.
The two Qwen3-1.7B configurations at $10^{-4}$ contribute $81\%$ of
this influence, and all three Qwen3-1.7B configurations contribute $87\%$.
Pooling the two configurations that differ only in rounding gives
$k=0.319\pm0.021$. Leaving one configuration out gives $0.310$ to $0.336$.
Thus the fit is stable to these checks, but mostly determined by one model.

\paragraph{Using the prediction.}
Once $k$ is fitted, a new model needs pretrained retention and an
attention-frozen reference run to estimate $kL$. The predicted gain
reaches the minimum decision threshold at
$L=0.031/0.326=0.095\pm0.005$, where the uncertainty reflects the
fitted slope only. A measured gain must also exceed twice its paired SE.
The two configurations with $L<0.1$ have unresolved gains:
$0.028\pm0.006$ on Qwen3-1.7B at $5\times10^{-5}$ and
$0.017\pm0.007$ on Qwen3-0.6B in float32.
The rule estimates a gain over random selection. It does not compare
drift-value with Fisher, SSU or other selected masks.

\paragraph{Changing the frozen fraction.}
\label{app:count-correction}
The response model suggests scaling selection gain between budgets by
$(q_f-f)(1-f)^{-\lambda}$. With $q_{0.20}=0.787$,
$q_{0.352}=0.884$ and $\lambda=-0.45$, this predicts a factor of $1.214$
from $35.2\%$ to $20\%$. All four smaller-budget configurations fall
below this prediction. Their gains sum to $0.201$, versus $0.262$
predicted. Separate proportional fits describe both budgets better than
this correction transfers between them.

\paragraph{Configurations excluded from the fit.}
Ten configurations test the fitted slope, including six at an unseen
model or learning rate (Appendix Table~\ref{tab:headroom}). At $35.2\%$,
Qwen3-0.6B with stochastic rounding at $10^{-4}$ gains
$0.0700\pm0.0113$ over random, and SmolLM2-1.7B at $2\times10^{-4}$
gains $0.1344\pm0.0112$.
The previously unseen Qwen3-4B and 8B gain $0.1517\pm0.0206$ and
$0.0989\pm0.0259$, respectively. Their gains are
$0.43$ and $0.32$ of remaining loss. These results support approximate
proportionality within the tested task, without making $k$ a universal constant.

\section{Additional selection and adaptation controls}
\label{app:additional-controls}

These controls test whether the main findings depend on the baseline
implementation, frozen fraction, calibration data, training duration, or
numerical precision. The CNN controls concern removal-value. The
transformer controls test unit and entry selection.

\paragraph{Reproducing HOPE's VGG-8 transfer table.}
Before the connectivity intervention, we ran the CIFAR-100-to-SVHN
protocol of HOPE Table~2 over 20 local transfer scenarios. Our DEFT port
omits HOPE's merge step. It nevertheless reproduces the reported
retention and H-score closely: $52.09$ and $65.98$ versus HOPE's $52.14$
and $65.82$ \citep{hope2026}. Appendix Table~\ref{tab:hope-reproduction} compares
all five published methods with their local counterparts. Replacing only
DEFT's greedy selection score with removal-value raises H-score by
$3.94\pm0.39$ paired points, positive in all 20 scenarios, while changing
target accuracy by $-0.33$ points.
The runs use 30-epoch target training at each method's
validation-selected operating point. Retention restores the original
source head and BatchNorm statistics before test evaluation. The controlled
connectivity test in Figure~\ref{fig:regime} instead fixes the core fractions
and compares isolated with exposed cores, so its absolute scores are not
pooled with this reproduction.

\paragraph{CNN controls across frozen fractions.}
A separate VGG-8 repeat with paired training batches covers $40\%$, $55\%$ and $70\%$
frozen units. Every arm reaches its new-task target on all 20 transfers.
At the intermediate $55\%$ rung, isolated removal-value, DEFT and random
cores retain $43.81\%$, $36.65\%$ and $5.35\%$ old-task accuracy, while
the exposed removal-value and DEFT cores retain $5.74\%$ and
$5.94\%$. This repeat changes the batch streams and
normalization protocol, so its $55\%$ point is not pooled into
Figure~\ref{fig:regime}.

\paragraph{A second CNN transfer pair.}
The main VGG-8 panel uses transfers away from the source classes. On a
second, near-transfer pair with disjoint CIFAR classes, removal-value
improves H-score over DEFT's selection score at the same frozen fraction
by $2.10\pm0.38$ points across 20 paired scenarios (19 positive).
In a separate matched-acquisition comparison over five scenarios, it
exceeds a random core by $27.20\pm3.86$ H-score points at the $45\%$
new-task target and $29.66\pm3.56$ at $50\%$, but its margins over DEFT
are $0.84\pm1.02$ and $2.19\pm1.16$, inside the decision threshold.
Thus this pair supports selection over random, while the matched-acquisition
comparison does not resolve an advantage over DEFT.

\paragraph{Unit selection and the frozen fraction.}
On Qwen3-1.7B, the removal-value core differs from random and
DEFT cores by at most $0.024$ in paired retention at $40\%$ frozen across
float32, bf16, and stochastic rounding.
The comparisons at the lower learning rates also remain unresolved
(Appendix Table~\ref{tab:unit-policies}).
On Qwen3-1.7B in ordinary bf16, removal-value exceeds random cores by
$0.0422\pm0.0141$ at $70\%$ frozen and $0.0656\pm0.0144$ at $30\%$.
The corresponding stochastic-rounding comparison at $30\%$ ends near chance.
It cannot establish whether that advantage transfers across precisions.
For entry selection, Appendix Tables~\ref{tab:language-budget} and~\ref{tab:vision-budget}
report the $20\%$ and $50\%$ controls.
Appendix Table~\ref{tab:half-budget} compares HFT \citep{hft2024} and SSU at a
$50\%$ aggregate budget, with their structural differences explicit.
Appendix Table~\ref{tab:score-ablation} separates input moments, downstream gains,
and calibration text at the same mask budget.
\paragraph{Budget and model-size scope.}
A five-model sweep compares random, drift-value, SSU, Fisher, Wanda and
RIA at $20\%$, $35.2\%$ and $50\%$ of each feed-forward matrix.
At $50\%$, Qwen3-0.6B acquires $0.949$ with drift-value versus $0.989$
with a random mask, while their retentions are $0.446$ and
$0.448$. On the additional SmolLM2-360M model at
$35.2\%$, drift-value retains $0.538$ against random $0.529$, Fisher
$0.533$ and SSU $0.532$. Its acquisition is $0.976$ against random
$0.980$. This small-model comparison does not resolve a
selection gain. At $50\%$, its drift-value acquisition falls to
$0.964$ against random $0.976$. These controls delimit
the useful frozen fraction and model-size range.

\paragraph{Alternative sensitivity scores.}
Joint drift retains the correlation between input activity and downstream
gain that drift-value factors apart. Column drift aggregates this
sensitivity over whole input columns. A KL-weighted variant uses
backward probes to measure changes in the output distribution through
Kullback--Leibler divergence. These alternatives were compared with SSU
and Fisher over 27 seeds, checking acquisition and correcting for
multiple comparisons. None establishes a win over both
baselines. With identical 128-block calibration inputs,
KL-weighted drift retains $0.513$, Fisher $0.516$, and drift-value
$0.507$. These controls do not support replacing the
original score.

\paragraph{Calibration source and size.}
On Qwen3-1.7B, these controls test whether masks depend on meaningful
text and whether more calibration improves retention. They use blocks of 256
tokens from WikiText-103 \citep{wikitext2017} or uniformly sampled non-special token ids.
The four sizes are 8, 32, 128 and 512 blocks, with smaller samples nested
within larger ones. Three calibration draws and nine training seeds give
108 paired combinations per score. Pooled
text-minus-random-token retention differences are $-0.0021\pm0.0022$
for drift-value, $+0.0115\pm0.0020$ for Fisher,
$-0.0040\pm0.0021$ for joint drift, and $+0.0013\pm0.0017$ for SSU.
All are below the decision threshold. The reported SEs
describe the pooled differences. They do not account for dependence
from reused training seeds and nested calibration samples.

Mask stability is a separate question. We measure it by Jaccard overlap,
the intersection divided by the union of selected entries from two
calibration draws. Increasing text calibration from 8 to 512 blocks
raises drift-value overlap from $0.663$ to $0.933$, SSU from $0.644$
to $0.929$, joint drift from $0.627$ to $0.921$, and Fisher from
$0.438$ to $0.755$. Random-token calibration gives higher overlaps
for all four scores. However, more stable masks do not
establish better retention: across three corpus draws, increasing from
8 to 512 blocks changes drift-value retention from $0.4994$ to $0.5017$
and Fisher from $0.5102$ to $0.5120$, with draw-wise differences spanning
zero.

\paragraph{Longer adaptation.}
To test whether a mask computed before training remains useful, we extend
40-epoch language runs to 80 and 160 epochs (Appendix Figure~\ref{fig:shelf}).
On Qwen2.5-1.5B, drift-value minus Fisher grows from
$0.0022\pm0.0081$ at 40 epochs to $0.0433\pm0.0102$ at 160, with
acquisition $0.9970$ for both. On Qwen3-1.7B, their difference remains
unresolved at every duration. It also remains unresolved
at 160 epochs on Qwen3-0.6B at $10^{-4}$ and SmolLM2-1.7B at
$2\times10^{-4}$.
At 160 epochs, drift-value exceeds static SSU by $0.0383\pm0.0074$
on Qwen3-1.7B and $0.0489\pm0.0063$ on Qwen2.5-1.5B, with similar
acquisition. Differences from a Fisher-blend score, an equal mixture of
mean-normalized Fisher and joint-drift scores, remain unresolved on
both models.

\paragraph{Recomputing masks during training.}
A refreshed mask is recomputed at the current weights, allowing entries
to change between frozen and trainable states. On Qwen2.5-1.5B, refreshing
every 40 epochs improves Fisher over its static mask by
$0.0261\pm0.0054$ and drift-value by $0.0028\pm0.0036$, both below the
minimum threshold. The refreshed Fisher and static drift-value also
have an unresolved difference.
After the first 40 epochs, Fisher's recomputed mask has Jaccard overlap
$0.4709$ with its original mask, compared with $0.7452$ for drift-value.
The corresponding overlaps on Qwen3-1.7B are $0.5458$ and $0.7614$.
Both scores become more stable later (Appendix Figure~\ref{fig:shelf}).

Refreshing Fisher every 20 epochs improves its 160-epoch retention by
$0.0411\pm0.0056$ on Qwen2.5-1.5B, reaching $0.4411$ versus $0.4433$
for static drift-value. On Qwen3-1.7B the improvement is
$0.0133\pm0.0046$. The refreshed Fisher versus static drift-value
difference is unresolved on both models.
Thus the long-run comparison depends partly on when a score is computed.
Refreshing requires additional calibration and backward passes for Fisher.

\paragraph{LoRA with the same backbone precision.}
These controls reduce the precision mismatch in the main LoRA comparison.
Both methods use an ordinary bf16 backbone, while LoRA's trainable
adapter state remains float32. On Qwen2.5-1.5B, attention plus drift-value
retains $0.6250$ versus LoRA's $0.5717$, with acquisition $0.9889$ and
$0.9859$. The paired retention gain is $0.0533\pm0.0121$.
On Qwen3-1.7B, LoRA acquires only $0.1637$ versus $0.9903$ for the
freezing policy. On SmolLM2-1.7B, freezing gains $0.0456\pm0.0121$
in retention but loses $0.0963\pm0.0081$ in acquisition. Neither
comparison isolates retention at matched acquisition. SmolLM2's selected
and random masks also remain unresolved.

\begin{figure}[t]
\centering
\includegraphics[width=0.92\textwidth]{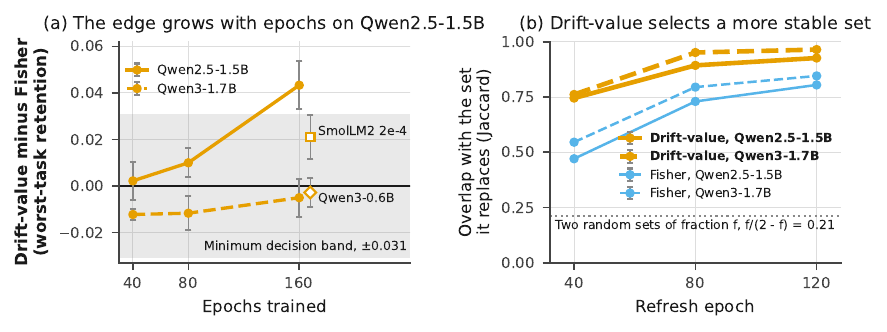}
\caption{\textbf{On Qwen2.5-1.5B drift-value pulls ahead of Fisher as adaptation runs longer, and its selected set changes less than Fisher's.}
(a) Drift-value minus Fisher on retention against epochs, nine seeds, means
$\pm$ SE, the $\pm0.031$ minimum decision band shaded. The hollow markers at 160 epochs
are SmolLM2-1.7B at $2\times10^{-4}$ and Qwen3-0.6B in
bf16 with stochastic rounding.
(b) Jaccard overlap of each score's recomputed set with the set it replaces at
epochs 40, 80 and 120. The dotted line is the approximate
expected overlap of two independent random sets of the same size,
$f/(2-f)=0.21$ at $f=0.352$.}
\label{fig:shelf}
\end{figure}

\paragraph{Selection beyond the largest entries.}
A control freezes the top $5\%$ by drift-value and fills
the remaining budget randomly.
On Qwen3-1.7B, this control trails the full drift-value ranking by $0.0317$,
with the same sign in all nine seeds.
On the vision stress experiment, it retains $0.286$ image accuracy,
compared with $0.446$ for full drift-value selection and $0.190$ for
random selection in this four-seed control.
The ranking below the largest entries also matters.
At three times the standard backbone rate, where attention freezing
leaves $0.074$ of image accuracy to recover, the drift-value mask's gain
over a random mask is $+0.009\pm0.010$ over four seeds, so on vision as
on language the gain follows the loss the attention freeze
leaves.

\paragraph{An additional retention task.}
To test retention beyond the three-task panel, we evaluate MMLU \citep{mmlu2021} on
Qwen3-1.7B using 200 items and nine paired seeds.
Drift-value exceeds its random control by $0.0311\pm0.0109$, just above
the $0.0310$ minimum threshold.

\begin{table}[!htbp]
\centering
\footnotesize
\caption{\textbf{Selection gains vary by model and follow the loss the attention freeze leaves.} Rows group models by family, then by increasing size, as in Figure~\ref{fig:stream}. With $35.2\%$ of every
feed-forward matrix frozen, stochastic rounding, and learning rate $10^{-4}$, nine seeds a
model. Retention is worst-task accuracy, and acquisition is fact recall.
The gain is drift-value minus random with its paired SE.
The Qwen3-1.7B row holds
the fixed $35.2\%$ of every matrix of Table~\ref{tab:races}, not the
count-matched mask of Appendix Tables~\ref{tab:bound}, \ref{tab:headroom}
and~\ref{tab:language-budget}.}
\label{tab:transfer-summary}
\input{figures/table5_transfer}
\end{table}

\begin{table}[!htbp]
\centering
\footnotesize
\caption{\textbf{The annex retains most, and drift-value beats a random mask of the
same count at all three budgets.} All tested policies on the vision
stress protocol of Figure~\ref{fig:vision}, means $\pm$ SE over the seeds
in the second column. Percentages beside entry policies are the share
of each feed-forward matrix frozen. Unit cores freeze approximately
$40\%$ of feed-forward units.}
\label{tab:vision-budget}
\input{figures/table_vision_budget}
\end{table}

\begin{table}[!htbp]
\centering\footnotesize
\caption{\textbf{The VGG-8 reproduction matches HOPE's reported DEFT result.
The removal-value core improves its H-score on the same local scenarios.}
Target accuracy, source retention and H-score are percentages. Published
means come from HOPE Table~2 \citep{hope2026}. Rep. denotes the local
reproduction over 20 paired CIFAR-100-to-SVHN transfers. Our DEFT port omits merging.
The removal-value (Ours) row changes only DEFT's selection score.
Its paired H-score gain over local DEFT is $3.94\pm0.39$ (SE).}
\label{tab:hope-reproduction}
\input{figures/table_hope_reproduction}
\end{table}

\begin{table}[!htbp]
\centering\footnotesize
\caption{\textbf{Isolation mainly changes source retention, while all policies reach the target.}
Complete comparison behind Figure~\ref{fig:regime}: means $\pm$ SE over
the same 20 transfers, in \%. H-score is the harmonic mean of retention
and acquisition.}
\label{tab:isolation}
\input{figures/table_isolation}
\end{table}

\begin{table}[!htbp]
\centering\footnotesize
\caption{\textbf{At the lower learning rates, removal-value, DEFT and random cores do not separate.}
Two Qwen3 models, nine seeds, means $\pm$ SE. Cores freeze $40\%$ of
feed-forward units, or $70\%$ where marked. Qwen3-0.6B uses float32 at
learning rate $2\times10^{-5}$, and Qwen3-1.7B bf16 SR at $5\times10^{-5}$.}
\label{tab:unit-policies}
\input{figures/table_unit_policies}
\end{table}

\begin{table}[!htbp]
\centering\small
\caption{\textbf{Drift-value exceeds random selection at each tested budget.}
Qwen3-1.7B, nine seeds and bf16 with stochastic rounding.
Both masks freeze attention. Retention and acquisition are means $\pm$ SE,
and the final column gives drift-value minus random at each matched budget.
Bold gains exceed $\max(0.031,2\,\mathrm{SE})$. Published-method comparisons appear in Table~\ref{tab:races} and Appendix Table~\ref{tab:half-budget}.}
\label{tab:language-budget}
\input{figures/table5_budget}
\end{table}
\FloatBarrier
\section{The pruning control}
\label{app:pruning-control}

Pruning checks the removal side of the distinction without adaptation
updates. Units are ranked by removal-value and attention is not pruned.
Density is the fraction of feed-forward units retained. We average
worst-task accuracy over seven densities from $0.95$ to $0.25$, using
three seeds per model and the evaluation in Appendix~\ref{app:settings}.
At uniform per-layer density, removal-value selection improves on random
selection by $0.016$ to $0.138$ across eight models from 360M to 32B, and
seven of the eight comparisons exceed twice their paired
SE.
Removal-value allocation from Equation~\ref{eq:phi} stays within $0.03$
of uniform allocation (Appendix Table~\ref{tab:pruning-controls}).
The separate repair experiments are specified in Appendix~\ref{app:settings}.
On a 26B mixture-of-experts model with 4B active parameters, when
deployment forces every expert to keep the same width, random picks beat
removal-value picks by up to $0.160$ at half density, so the
pruning results are scoped to dense, single-modality models.

\begin{table}[!htbp]
\centering\small
\caption{\textbf{Removal-value selection improves on random selection, and changing allocation gives smaller gains.}
Eight-model control study. The first difference holds uniform allocation
fixed, and the second holds the removal-value ranking fixed. Values are worst-task accuracy
differences averaged over seven densities, with SE across three seeds.
Bold selection differences exceed twice their paired SE.}
\label{tab:pruning-controls}
\input{figures/table_pruning_controls}
\end{table}

\FloatBarrier
\section{Further figures and tables}
\label{app:figures}

The following tables report the complete comparisons summarized in the
main figures. They separate score components, structural freezing rules,
and alternative adaptation methods. The final figure shows what the
annex learns while the original backbone stays frozen.

Appendix Tables~\ref{tab:score-ablation}--\ref{tab:lora} give these
comparisons, and Appendix Figure~\ref{fig:annex} shows the annex experiment.
At the standard rate, training the backbone with the annex reduces image
retention to $0.204$.

\begin{table}[!htbp]
\centering
\small
\caption{\textbf{Input-only and random-token variants have unresolved differences from drift-value. Row gain alone trails it.} Score ablations on Qwen3-1.7B at the count the
rounding rule of Appendix~\ref{app:precision} names. Retention and acquisition are means $\pm$ SE
across nine seeds under stochastic rounding. The final column is paired retention minus drift-value.}
\label{tab:score-ablation}
\input{figures/table5_ablation}
\end{table}

\begin{table}[!htbp]
\centering
\small
\caption{\textbf{Drift-value leads HFT and random entries at a $50\%$ budget. Its difference from SSU is unresolved.} Random entries, HFT, SSU, and drift-value at a $50\%$
feed-forward freezing budget,
with attention frozen. All use Qwen3-1.7B, bf16 with stochastic rounding,
and nine seeds. SSU freezes whole input columns. The HFT row freezes two of the three feed-forward matrices in a random
half of the blocks and one in each other block, so its budget is matched
in aggregate rather than in each matrix. HFT itself also trains two of
the four attention matrices in every block, which every row here holds
frozen.
Means $\pm$ SE. The final column is paired retention minus
drift-value.}
\label{tab:half-budget}
\input{figures/table5_halfbudget}
\end{table}

\begin{table}[!htbp]
\centering
\footnotesize
\caption{\textbf{With backbone precision matched, drift-value retains more than LoRA on Qwen3-0.6B and Qwen2.5-1.5B.}
Adaptation policies across six models. The final column is paired retention
minus attention plus drift-value. Bold differences exceed
$\max(0.031,2\,\mathrm{SE})$ in magnitude.
Qwen3-0.6B uses float32. The other initial blocks compare bf16 LoRA with
stochastic-rounding freezing, so precision and policy both differ.
The final three blocks use ordinary bf16 backbones for both methods.
LoRA adapters remain float32. Of these three, only Qwen2.5-1.5B has
matched acquisition. Rows with unmatched or unstable
acquisition omit retention differences, as defined in the final note.}
\label{tab:lora}
\begingroup
\setlength{\tabcolsep}{3pt}
\input{figures/table6_lora}
\endgroup
\end{table}

\begin{table}[t]
\centering
\footnotesize
\textbf{Appendix Table~\thetable\ (continued).}\par\smallskip
\begingroup
\setlength{\tabcolsep}{3pt}
\input{figures/table6_lora_cont}
\endgroup
\end{table}

\begin{figure}[!htbp]
\centering
\includegraphics[width=\textwidth]{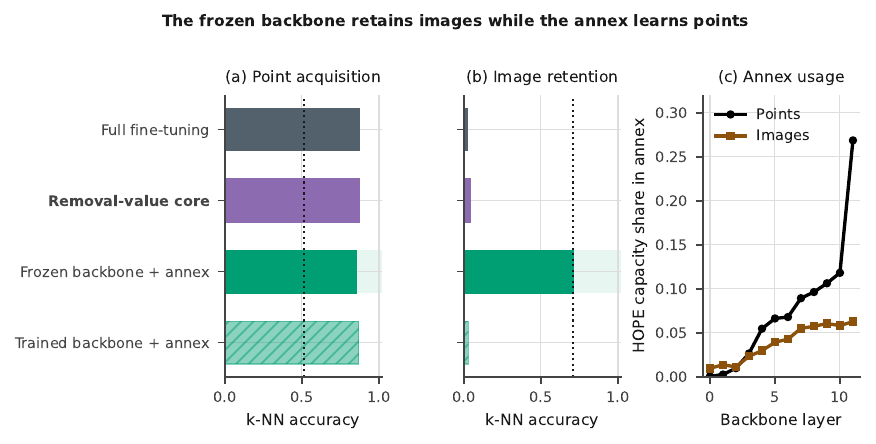}
\caption{\textbf{Freezing the backbone preserves image accuracy while the annex learns point clouds.} Single seed, stress protocol.
Figure~\ref{fig:vision} gives the comparison across multiple seeds. (a) Point-cloud
acquisition and (b) image retention, with accuracy before adaptation dotted.
(c) Annex units' share of HOPE's unit capacity (Equation~\ref{eq:capacity}),
measured separately on point-cloud and image inputs after training. Each
share divides the annex units' total capacity by that of all units in its
feed-forward layer. The pale row marks the frozen-backbone annex.}
\label{fig:annex}
\end{figure}

%% file: figures/table7_bound.tex
\resizebox{\textwidth}{!}{%
\begin{tabular}{p{3.8cm}ccl>{\columncolor{paperrow}}cl}
\toprule
Level & Frozen & \shortstack{Spread\\(CV)} & Model estimate & Measured gain & \shortstack{Gain vs.\\threshold} \\
\midrule
Attention heads & 0.500 & 0.30 & 0.009 [0.009, 0.017] & $-0.003\pm0.012$ (2/9) & unresolved \\
\textbf{Units, removal-value, bf16} & 0.400 & 2.05 & 0.029 & $+0.022\pm0.020$ (6/9) & unresolved \\
\textbf{Units, removal-value, bf16 SR} & 0.400 & 2.05 & 0.052 & $+0.013\pm0.020$ (5/9) & unresolved \\
\textbf{Entries, drift-value} & 0.352 & 6.74 & 0.078 [0.066, 0.089] & $\bm{+0.077\pm0.011}$ (9/9) & exceeds threshold \\
\bottomrule
\end{tabular}}
\par\smallskip
{\footnotesize Frozen is the fraction held. CV is the standard deviation of drift-value divided by its mean. Gains are paired over nine seeds, with positive seeds in parentheses. Measured gains use the threshold $\max(0.031,2\,\mathrm{SE})$. Brackets span four entry estimates or two head estimates using different shares and response scales. Estimates do not guarantee gains. The head estimate uses the top half by drift-value. The measured policy uses attention-window scores alongside a 40\% unit core. Entries use count-matched masks. Unit policies select by removal-value.}

%% file: figures/table8_headroom.tex
\resizebox{\textwidth}{!}{%
\begin{tabular}{llcccrc>{\columncolor{paperrow}}r}
\toprule
Model & Precision & Rate & Frozen & \shortstack{Remaining\\retention loss} & \shortstack{Random-mask\\gain} & \shortstack{Predicted\\selection gain} & \shortstack{Selection\\gain} \\
\midrule
Qwen3-4B$^{\ddagger}$ & bf16 SR & $10^{-4}$ & 0.352 & 0.353\,$\pm$\,0.015 & +0.100\,$\pm$\,0.024 (9/9) & +0.115 & $\bm{+0.152\pm0.021}$ (9/9) \\
SmolLM2-1.7B$^{\ddagger}$ & bf16 SR & $2\times10^{-4}$ & 0.352 & 0.346\,$\pm$\,0.014 & +0.042\,$\pm$\,0.005 (9/9) & +0.112 & $\bm{+0.134\pm0.011}$ (9/9) \\
Qwen3-8B$^{\ddagger}$ & bf16 SR & $10^{-4}$ & 0.352 & 0.313\,$\pm$\,0.030 & +0.132\,$\pm$\,0.032 (8/9) & +0.102 & $\bm{+0.099\pm0.026}$ (9/9) \\
Qwen2.5-1.5B$^{\ddagger}$ & bf16 SR & $10^{-4}$ & 0.352 & 0.304\,$\pm$\,0.020 & +0.108\,$\pm$\,0.011 (9/9) & +0.099 & $\bm{+0.078\pm0.011}$ (9/9) \\
Qwen3-1.7B$^{\dagger}$ & Kahan bf16 & $10^{-4}$ & 0.352 & 0.229\,$\pm$\,0.010 & +0.084\,$\pm$\,0.008 (9/9) & +0.075 & $\bm{+0.074\pm0.006}$ (9/9) \\
Qwen3-1.7B$^{\dagger}$ & bf16 SR & $10^{-4}$ & 0.352 & 0.221\,$\pm$\,0.012 & +0.067\,$\pm$\,0.012 (8/9) & +0.072 & $\bm{+0.077\pm0.011}$ (9/9) \\
Qwen3-0.6B$^{\ddagger}$ & bf16 SR & $10^{-4}$ & 0.352 & 0.152\,$\pm$\,0.012 & +0.031\,$\pm$\,0.014 (7/9) & +0.049 & $\bm{+0.070\pm0.011}$ (9/9) \\
Qwen2.5-1.5B$^{\ddagger}$ & bf16 SR & $5\times10^{-5}$ & 0.352 & 0.132\,$\pm$\,0.013 & +0.037\,$\pm$\,0.009 (9/9) & +0.043 & $\bm{+0.056\pm0.013}$ (9/9) \\
SmolLM2-1.7B$^{\dagger}$ & bf16 SR & $10^{-4}$ & 0.352 & 0.119\,$\pm$\,0.012 & +0.062\,$\pm$\,0.010 (9/9) & +0.039 & +0.029\,$\pm$\,0.005 (9/9) \\
Qwen3-1.7B$^{\dagger}$ & bf16 SR & $5\times10^{-5}$ & 0.352 & 0.088\,$\pm$\,0.004 & +0.029\,$\pm$\,0.009 (7/9) & +0.029 & +0.028\,$\pm$\,0.006 (8/9) \\
Qwen3-0.6B$^{\dagger}$ & float32 & $2\times10^{-5}$ & 0.352 & 0.040\,$\pm$\,0.009 & +0.024\,$\pm$\,0.007 (8/9) & +0.013 & +0.017\,$\pm$\,0.007 (6/9) \\
\midrule
Qwen2.5-1.5B$^{\ddagger}$ & bf16 SR & $10^{-4}$ & 0.2 & 0.304\,$\pm$\,0.020 & +0.059\,$\pm$\,0.010 (9/9) & +0.120 & $\bm{+0.097\pm0.021}$ (9/9) \\
Qwen3-0.6B$^{\ddagger}$ & bf16 SR & $10^{-4}$ & 0.2 & 0.152\,$\pm$\,0.012 & +0.022\,$\pm$\,0.013 (5/9) & +0.060 & $\bm{+0.041\pm0.012}$ (7/9) \\
SmolLM2-1.7B$^{\ddagger}$ & bf16 SR & $10^{-4}$ & 0.2 & 0.119\,$\pm$\,0.012 & +0.041\,$\pm$\,0.008 (9/9) & +0.047 & $\bm{+0.046\pm0.007}$ (9/9) \\
Qwen3-1.7B$^{\ddagger}$ & bf16 SR & $5\times10^{-5}$ & 0.2 & 0.088\,$\pm$\,0.004 & +0.027\,$\pm$\,0.007 (8/9) & +0.035 & +0.017\,$\pm$\,0.008 (7/9) \\
\bottomrule
\end{tabular}}
\par\smallskip
{\footnotesize Predicted selection gain is 0.326 times remaining retention loss. At a frozen fraction of 0.2, the prediction includes the factor 1.21 from Appendix~\ref{app:count-correction}. Frozen is the aggregate feed-forward entry fraction. Daggers mark the five fit configurations. Double daggers mark the ten excluded configurations (Appendix Figure~\ref{fig:headroom}). Parentheses give positive paired seeds out of all seeds.}

%% file: figures/table5_transfer.tex
\setlength{\tabcolsep}{3pt}
\begin{tabular}{@{}lrrrrr>{\columncolor{paperrow}}cc@{}}
\toprule
Model & Seeds & \shortstack{Pretrained\\retention} & \shortstack{Attention\\frozen} & \shortstack{Random\\mask} & \shortstack{\textbf{Drift-value}\\retention} & \shortstack{Selection\\gain} & \shortstack{\textbf{Drift-value}\\acquisition} \\
\midrule
Qwen2.5-1.5B & 9 & 0.642 & 0.338 & 0.446 & 0.524 & {\boldmath $+0.078\pm0.011$} & 0.989 \\
Qwen3-0.6B & 9 & 0.449 & 0.297 & 0.328 & 0.398 & {\boldmath $+0.070\pm0.011$} & 0.990 \\
Qwen3-1.7B & 9 & 0.577 & 0.356 & 0.426 & 0.502 & {\boldmath $+0.076\pm0.006$} & 0.990 \\
Qwen3-4B & 9 & 0.659 & 0.307 & 0.407 & 0.558 & {\boldmath $+0.152\pm0.021$} & 0.991 \\
Qwen3-8B & 9 & 0.716 & 0.402 & 0.534 & 0.633 & {\boldmath $+0.099\pm0.026$} & 0.988 \\
SmolLM2-1.7B & 9 & 0.664 & 0.545 & 0.607 & 0.636 & $+0.029\pm0.005$ & 0.982 \\
\bottomrule
\end{tabular}
\par\smallskip
{\footnotesize Bold differences exceed $\max(0.031,2\,\mathrm{SE})$ in absolute value.}

%% file: figures/table_vision_budget.tex
\begin{tabular}{lc>{\columncolor{paperrow}}cc}
\toprule
Policy & Seeds & Image retention & Point-cloud acquisition \\
\midrule
Full fine-tuning & 4 & $0.025\pm0.001$ & $0.871\pm0.008$ \\
\textbf{Removal-value core} & 4 & $0.036\pm0.002$ & $0.874\pm0.007$ \\
DEFT core & 4 & $0.044\pm0.006$ & $0.872\pm0.003$ \\
DEFT core, exposed & 4 & $0.030\pm0.005$ & $0.872\pm0.004$ \\
Attention frozen & 9 & $0.097\pm0.003$ & $0.887\pm0.003$ \\
Attention + random, 20.0\% & 4 & $0.127\pm0.015$ & $0.893\pm0.002$ \\
\textbf{Attention + drift-value, 20.0\%} & 4 & $0.316\pm0.041$ & $0.887\pm0.003$ \\
Attention + SSU columns, 20.0\% & 4 & $0.276\pm0.027$ & $0.880\pm0.006$ \\
Attention + Wanda score, 20.0\% & 4 & $0.166\pm0.015$ & $0.885\pm0.005$ \\
Attention + RIA score, 20.0\% & 4 & $0.158\pm0.006$ & $0.888\pm0.002$ \\
Attention + random, 35.2\% & 9 & $0.187\pm0.012$ & $0.888\pm0.004$ \\
\textbf{Attention + drift-value, 35.2\%} & 9 & $0.440\pm0.014$ & $0.882\pm0.003$ \\
Attention + SSU columns, 35.2\% & 4 & $0.344\pm0.049$ & $0.883\pm0.001$ \\
Attention + Wanda score, 35.2\% & 4 & $0.233\pm0.046$ & $0.884\pm0.005$ \\
Attention + RIA score, 35.2\% & 4 & $0.212\pm0.034$ & $0.887\pm0.003$ \\
Attention + random, 50.0\% & 4 & $0.291\pm0.023$ & $0.882\pm0.002$ \\
\textbf{Attention + drift-value, 50.0\%} & 4 & $0.549\pm0.026$ & $0.883\pm0.003$ \\
Attention + SSU columns, 50.0\% & 4 & $0.521\pm0.026$ & $0.876\pm0.006$ \\
Attention + Wanda score, 50.0\% & 4 & $0.358\pm0.047$ & $0.892\pm0.002$ \\
Attention + RIA score, 50.0\% & 4 & $0.334\pm0.024$ & $0.877\pm0.006$ \\
Bottleneck adapter & 4 & $0.373\pm0.034$ & $0.875\pm0.011$ \\
Frozen backbone + annex & 4 & $0.725\pm0.006$ & $0.868\pm0.007$ \\
\bottomrule
\end{tabular}
\par\smallskip
{\footnotesize Means and SEs use the seed count in the second column: 4 seeds for 19 arms and 9 seeds for 3 arms.}

%% file: figures/table_hope_reproduction.tex
\setlength{\tabcolsep}{4pt}
\begin{tabular*}{\textwidth}{@{}l@{\extracolsep{\fill}}rrrrr>{\columncolor{paperrow}}r@{}}
\toprule
 & \multicolumn{2}{c}{Target accuracy} & \multicolumn{2}{c}{Source retention} & \multicolumn{2}{c}{H-score} \\
\cmidrule(lr){2-3}\cmidrule(lr){4-5}\cmidrule(l){6-7}
Method & HOPE & Rep. & HOPE & Rep. & HOPE & Rep. \\
\midrule
DEFT & 89.79 & 90.85 & 52.14 & 52.09 & 65.82 & 65.98 \\
Head-only & 36.11 & 34.84 & 63.13 & 62.78 & 45.79 & 44.69 \\
Full fine-tuning & 94.09 & 93.66 & 7.52 & 6.04 & 13.88 & 11.31 \\
EWC & 93.94 & 93.69 & 6.74 & 6.09 & 12.54 & 11.41 \\
BN-only tuning & 81.91 & 83.05 & 5.44 & 6.11 & 10.18 & 11.36 \\
\midrule
\textbf{Removal-value (Ours)} & -- & 90.53 & -- & 57.29 & -- & 69.92 \\
\bottomrule
\end{tabular*}

%% file: figures/table_isolation.tex
\begin{tabular}{ll>{\columncolor{paperrow}}ccc}
\toprule
Frozen & Policy / connectivity & \begin{tabular}[b]{@{}c@{}}Retention\\(old-task acc., \%)\end{tabular} & \begin{tabular}[b]{@{}c@{}}Acquisition\\(new-task acc., \%)\end{tabular} & H-score \\
\midrule
40\% & Random core, exposed & $7.36\pm0.19$ & $82.16\pm0.32$ & $13.49\pm0.32$ \\
40\% & Random core, isolated & $5.03\pm0.03$ & $81.50\pm0.37$ & $9.48\pm0.05$ \\
40\% & DEFT core, exposed & $6.21\pm0.17$ & $81.01\pm0.43$ & $11.53\pm0.30$ \\
40\% & DEFT core, isolated & $13.88\pm0.90$ & $81.31\pm0.40$ & $23.49\pm1.31$ \\
40\% & \textbf{Removal-value core, exposed} & $6.67\pm0.23$ & $81.95\pm0.31$ & $12.32\pm0.40$ \\
40\% & \textbf{Removal-value core, isolated} & $18.54\pm1.42$ & $81.20\pm0.35$ & $29.66\pm1.93$ \\
70\% & Random core, exposed & $6.99\pm0.29$ & $81.72\pm0.31$ & $12.85\pm0.48$ \\
70\% & Random core, isolated & $8.69\pm0.82$ & $81.07\pm0.29$ & $15.46\pm1.28$ \\
70\% & DEFT core, exposed & $5.08\pm0.09$ & $81.37\pm0.40$ & $9.56\pm0.16$ \\
70\% & DEFT core, isolated & $52.11\pm1.52$ & $80.38\pm0.30$ & $62.98\pm1.18$ \\
70\% & \textbf{Removal-value core, exposed} & $5.58\pm0.18$ & $81.18\pm0.28$ & $10.43\pm0.31$ \\
70\% & \textbf{Removal-value core, isolated} & $57.33\pm1.61$ & $80.97\pm0.25$ & $66.87\pm1.16$ \\
\midrule
0\% & Gentle fine-tuning & $32.56\pm1.81$ & $80.98\pm0.22$ & $45.91\pm1.79$ \\
\bottomrule
\end{tabular}

%% file: figures/table_unit_policies.tex
\begin{tabular}{ll>{\columncolor{paperrow}}cc}
\toprule
Model & Policy & Retention & Acquisition \\
\midrule
Qwen3-0.6B & Full fine-tuning & $0.356\pm0.012$ & $0.988\pm0.004$ \\
 & \textbf{Removal-value core} & $0.384\pm0.010$ & $0.991\pm0.003$ \\
 & DEFT core & $0.377\pm0.012$ & $0.990\pm0.004$ \\
 & Random core & $0.383\pm0.013$ & $0.987\pm0.003$ \\
 & \textbf{Removal-value core, 70\%} & $0.393\pm0.013$ & $0.958\pm0.028$ \\
 & Random core, 70\% & $0.389\pm0.010$ & $0.992\pm0.002$ \\
 & \textbf{Removal-value core + attention frozen} & $0.441\pm0.009$ & $0.977\pm0.008$ \\
\midrule
Qwen3-1.7B & Full fine-tuning & $0.405\pm0.016$ & $0.990\pm0.003$ \\
 & \textbf{Removal-value core} & $0.448\pm0.012$ & $0.988\pm0.004$ \\
 & DEFT core & $0.464\pm0.012$ & $0.988\pm0.004$ \\
 & Random core & $0.442\pm0.012$ & $0.993\pm0.004$ \\
 & \textbf{Removal-value core, 70\%} & $0.484\pm0.010$ & $0.986\pm0.004$ \\
 & Random core, 70\% & $0.480\pm0.010$ & $0.991\pm0.003$ \\
 & \textbf{Removal-value core + attention frozen} & $0.537\pm0.008$ & $0.990\pm0.003$ \\
\bottomrule
\end{tabular}

%% file: figures/table5_budget.tex
\setlength{\tabcolsep}{3pt}
\begin{tabular}{@{}lcccc>{\columncolor{paperrow}}c@{}}
\toprule
Frozen feed- & \multicolumn{2}{c}{Random mask} & \multicolumn{2}{c}{\textbf{Drift-value mask}} & Paired \\
forward entries & \begin{tabular}[b]{@{}c@{}}Retention\\(worst-task acc.)\end{tabular} & \begin{tabular}[b]{@{}c@{}}Acquisition\\(fact recall)\end{tabular} & \begin{tabular}[b]{@{}c@{}}Retention\\(worst-task acc.)\end{tabular} & \begin{tabular}[b]{@{}c@{}}Acquisition\\(fact recall)\end{tabular} & \begin{tabular}[b]{@{}c@{}}gain in\\retention\end{tabular} \\
\midrule
20\% & $0.388\pm0.015$ & $0.985\pm0.004$ & $0.474\pm0.011$ & $0.990\pm0.003$ & {\boldmath $+0.086\pm0.013$} \\
35.2\% & $0.423\pm0.008$ & $0.990\pm0.002$ & $0.500\pm0.008$ & $0.991\pm0.002$ & {\boldmath $+0.077\pm0.011$} \\
50\% & $0.471\pm0.012$ & $0.992\pm0.002$ & $0.518\pm0.006$ & $0.990\pm0.003$ & {\boldmath $+0.047\pm0.012$} \\
\bottomrule
\end{tabular}

%% file: figures/table_pruning_controls.tex
\begin{tabular}{l>{\columncolor{paperrow}}cc}
\toprule
Model & \textbf{Removal-value} minus random & \textbf{Removal-value} minus uniform allocation \\
\midrule
Mistral-7B & $\bm{+0.138\pm0.024}$ & $-0.029\pm0.003$ \\
OLMo-2-7B & $\bm{+0.109\pm0.020}$ & $+0.002\pm0.003$ \\
Qwen3-0.6B & $\bm{+0.017\pm0.006}$ & $+0.009\pm0.003$ \\
Qwen3-1.7B & $+0.016\pm0.013$ & $+0.021\pm0.004$ \\
Qwen3-4B & $\bm{+0.043\pm0.002}$ & $+0.016\pm0.001$ \\
Qwen3-8B & $\bm{+0.030\pm0.003}$ & $-0.016\pm0.009$ \\
Qwen3-32B & $\bm{+0.089\pm0.030}$ & $+0.004\pm0.003$ \\
SmolLM2-360M & $\bm{+0.039\pm0.007}$ & $-0.011\pm0.002$ \\
\bottomrule
\end{tabular}

%% file: figures/table5_ablation.tex
\setlength{\tabcolsep}{4pt}
\begin{tabular*}{\textwidth}{@{}l@{\extracolsep{\fill}}cc>{\columncolor{paperrow}}c@{}}
\toprule
Policy / selection rule & \begin{tabular}[b]{@{}c@{}}Retention\\(worst-task accuracy)\end{tabular} & \begin{tabular}[b]{@{}c@{}}Acquisition\\(fact recall)\end{tabular} & \begin{tabular}[b]{@{}c@{}}$\Delta$ retention\\vs \textbf{drift-value}\end{tabular} \\
\midrule
Pretrained model & $0.577\pm0.005$ & -- & -- \\
Output row gain alone & $0.466\pm0.009$ & $0.986\pm0.005$ & {\boldmath $-0.034\pm0.007$} \\
Input moment alone & $0.492\pm0.010$ & $0.990\pm0.002$ & $-0.008\pm0.007$ \\
\textbf{Drift-value} & $0.500\pm0.008$ & $0.991\pm0.002$ & reference \\
\textbf{Drift-value, random tokens} & $0.503\pm0.006$ & $0.986\pm0.004$ & $+0.003\pm0.007$ \\
\bottomrule
\end{tabular*}
\par\smallskip
{\footnotesize Bold differences exceed $\max(0.031,2\,\mathrm{SE})$ in absolute value.}

%% file: figures/table5_halfbudget.tex
\setlength{\tabcolsep}{4pt}
\begin{tabular*}{\textwidth}{@{}l@{\extracolsep{\fill}}cc>{\columncolor{paperrow}}c@{}}
\toprule
Policy / selection rule & \begin{tabular}[b]{@{}c@{}}Retention\\(worst-task accuracy)\end{tabular} & \begin{tabular}[b]{@{}c@{}}Acquisition\\(fact recall)\end{tabular} & \begin{tabular}[b]{@{}c@{}}$\Delta$ retention\\vs \textbf{drift-value}\end{tabular} \\
\midrule
Pretrained model & $0.577\pm0.005$ & -- & -- \\
Random mask (control) & $0.471\pm0.012$ & $0.992\pm0.002$ & {\boldmath $-0.047\pm0.012$} \\
Half fine-tuning (HFT) & $0.474\pm0.015$ & $0.988\pm0.003$ & {\boldmath $-0.043\pm0.013$} \\
SSU columns & $0.504\pm0.008$ & $0.990\pm0.004$ & $-0.014\pm0.005$ \\
\textbf{Drift-value} & $0.518\pm0.006$ & $0.990\pm0.003$ & reference \\
\bottomrule
\end{tabular*}
\par\smallskip
{\footnotesize Bold differences exceed $\max(0.031,2\,\mathrm{SE})$ in absolute value.}

%% file: figures/table6_lora.tex
\begin{tabularx}{\textwidth}{@{}>{\raggedright\arraybackslash}Xccccc>{\columncolor{paperrow}}c@{}}
\toprule
Policy & Precision & LR & \shortstack{Retention\\(worst-task\\accuracy)} & \shortstack{Change from\\pretrained\\accuracy} & \shortstack{Acquisition\\(fact recall)} & \shortstack{$\Delta$ retention\\vs attention +\\\textbf{drift-value}} \\
\midrule
\multicolumn{7}{l}{\textit{Qwen2.5-1.5B}} \\
Full fine-tuning & bf16 SR & 1e-4 & $0.306{\pm}0.009$ & $-0.336{\pm}0.013$ & $0.986{\pm}0.003$ & {\boldmath $-0.218{\pm}0.008$} \\
DEFT core & bf16 SR & 1e-4 & $0.336{\pm}0.010$ & $-0.307{\pm}0.018$ & $0.991{\pm}0.003$ & {\boldmath $-0.189{\pm}0.011$} \\
SSU columns & bf16 SR & 1e-4 & $0.503{\pm}0.007$ & $-0.139{\pm}0.011$ & $0.990{\pm}0.004$ & $-0.021{\pm}0.004$ \\
Empirical Fisher & bf16 SR & 1e-4 & $0.508{\pm}0.010$ & $-0.134{\pm}0.016$ & $0.988{\pm}0.003$ & $-0.017{\pm}0.009$ \\
Wanda freezing score & bf16 SR & 1e-4 & $0.469{\pm}0.010$ & $-0.173{\pm}0.016$ & $0.986{\pm}0.004$ & {\boldmath $-0.055{\pm}0.007$} \\
RIA freezing score & bf16 SR & 1e-4 & $0.445{\pm}0.013$ & $-0.197{\pm}0.018$ & $0.990{\pm}0.003$ & {\boldmath $-0.079{\pm}0.012$} \\
Attention frozen & bf16 SR & 1e-4 & $0.338{\pm}0.017$ & $-0.304{\pm}0.020$ & $0.983{\pm}0.005$ & {\boldmath $-0.187{\pm}0.012$} \\
Attention + random & bf16 SR & 1e-4 & $0.446{\pm}0.015$ & $-0.196{\pm}0.017$ & $0.984{\pm}0.005$ & {\boldmath $-0.078{\pm}0.011$} \\
\textbf{Attention + drift-value} & bf16 SR & 1e-4 & $0.524{\pm}0.007$ & $-0.118{\pm}0.011$ & $0.989{\pm}0.004$ & reference \\
LoRA & bf16 & 1e-4 & $0.572{\pm}0.011$ & $-0.071{\pm}0.011$ & $0.986{\pm}0.005$ & {\boldmath $+0.047{\pm}0.011$} \\
\midrule
\multicolumn{7}{l}{\textit{Qwen3-0.6B}} \\
Full fine-tuning & fp32 & 2e-5 & $0.356{\pm}0.012$ & $-0.092{\pm}0.012$ & $0.988{\pm}0.004$ & {\boldmath $-0.092{\pm}0.007$} \\
DEFT core & fp32 & 2e-5 & $0.377{\pm}0.012$ & $-0.071{\pm}0.010$ & $0.990{\pm}0.004$ & {\boldmath $-0.071{\pm}0.007$} \\
SSU columns & fp32 & 2e-5 & $0.449{\pm}0.010$ & $0.001{\pm}0.006$ & $0.980{\pm}0.003$ & $+0.001{\pm}0.004$ \\
Empirical Fisher & fp32 & 2e-5 & $0.448{\pm}0.011$ & $0.001{\pm}0.005$ & $0.965{\pm}0.015$ & $-0.000{\pm}0.003$ \\
Wanda freezing score & fp32 & 2e-5 & $0.438{\pm}0.010$ & $-0.009{\pm}0.009$ & $0.987{\pm}0.004$ & $-0.010{\pm}0.005$ \\
RIA freezing score & fp32 & 2e-5 & $0.439{\pm}0.010$ & $-0.008{\pm}0.009$ & $0.991{\pm}0.002$ & $-0.009{\pm}0.005$ \\
Attention frozen & fp32 & 2e-5 & $0.408{\pm}0.012$ & $-0.040{\pm}0.009$ & $0.979{\pm}0.013$ & {\boldmath $-0.041{\pm}0.009$} \\
Attention + random & fp32 & 2e-5 & $0.432{\pm}0.006$ & $-0.016{\pm}0.010$ & $0.987{\pm}0.005$ & $-0.017{\pm}0.007$ \\
\textbf{Attention + drift-value} & fp32 & 2e-5 & $0.448{\pm}0.010$ & $0.001{\pm}0.007$ & $0.973{\pm}0.006$ & reference \\
LoRA & fp32 & 1e-4 & $0.384{\pm}0.008$ & $-0.063{\pm}0.011$ & $0.989{\pm}0.003$ & {\boldmath $-0.064{\pm}0.010$} \\
LoRA & fp32 & 2e-5 & $0.444{\pm}0.005$ & $-0.004{\pm}0.012$ & $0.705{\pm}0.020$ & not matched \\
\midrule
\multicolumn{7}{l}{\textit{Qwen3-1.7B}} \\
Full fine-tuning & bf16 SR & 1e-4 & $0.263{\pm}0.010$ & $-0.314{\pm}0.009$ & $0.988{\pm}0.004$ & {\boldmath $-0.239{\pm}0.006$} \\
DEFT core & bf16 SR & 1e-4 & $0.312{\pm}0.016$ & $-0.266{\pm}0.019$ & $0.911{\pm}0.076$ & variable acq. \\
SSU columns & bf16 SR & 1e-4 & $0.487{\pm}0.007$ & $-0.090{\pm}0.007$ & $0.988{\pm}0.004$ & $-0.015{\pm}0.006$ \\
Empirical Fisher & bf16 SR & 1e-4 & $0.511{\pm}0.006$ & $-0.066{\pm}0.008$ & $0.989{\pm}0.003$ & $+0.009{\pm}0.007$ \\
Wanda freezing score & bf16 SR & 1e-4 & $0.458{\pm}0.011$ & $-0.119{\pm}0.009$ & $0.987{\pm}0.003$ & {\boldmath $-0.044{\pm}0.005$} \\
RIA freezing score & bf16 SR & 1e-4 & $0.429{\pm}0.009$ & $-0.148{\pm}0.009$ & $0.990{\pm}0.003$ & {\boldmath $-0.073{\pm}0.009$} \\
Attention frozen & bf16 SR & 1e-4 & $0.356{\pm}0.011$ & $-0.221{\pm}0.012$ & $0.990{\pm}0.003$ & {\boldmath $-0.146{\pm}0.009$} \\
Attention + random & bf16 SR & 1e-4 & $0.426{\pm}0.008$ & $-0.151{\pm}0.008$ & $0.994{\pm}0.003$ & {\boldmath $-0.076{\pm}0.006$} \\
\textbf{Attention + drift-value} & bf16 SR & 1e-4 & $0.502{\pm}0.007$ & $-0.075{\pm}0.005$ & $0.990{\pm}0.004$ & reference \\
LoRA & bf16 & 5e-4 & $0.241{\pm}0.006$ & $-0.337{\pm}0.009$ & $0.164{\pm}0.024$ & not matched \\
LoRA & bf16 & 1e-4 & $0.513{\pm}0.011$ & $-0.064{\pm}0.011$ & $0.990{\pm}0.004$ & $+0.011{\pm}0.012$ \\
\midrule
\multicolumn{7}{l}{\textit{Qwen3-4B}} \\
Full fine-tuning & bf16 SR & 1e-4 & $0.252{\pm}0.009$ & $-0.407{\pm}0.010$ & $0.948{\pm}0.042$ & {\boldmath $-0.306{\pm}0.019$} \\
DEFT core & bf16 SR & 1e-4 & $0.243{\pm}0.007$ & $-0.416{\pm}0.007$ & $0.889{\pm}0.058$ & variable acq. \\
SSU columns & bf16 SR & 1e-4 & $0.524{\pm}0.015$ & $-0.136{\pm}0.015$ & $0.987{\pm}0.004$ & {\boldmath $-0.034{\pm}0.009$} \\
Empirical Fisher & bf16 SR & 1e-4 & $0.545{\pm}0.015$ & $-0.114{\pm}0.017$ & $0.993{\pm}0.002$ & $-0.013{\pm}0.010$ \\
Wanda freezing score & bf16 SR & 1e-4 & $0.453{\pm}0.017$ & $-0.206{\pm}0.017$ & $0.981{\pm}0.008$ & {\boldmath $-0.105{\pm}0.012$} \\
RIA freezing score & bf16 SR & 1e-4 & $0.451{\pm}0.019$ & $-0.209{\pm}0.020$ & $0.987{\pm}0.002$ & {\boldmath $-0.108{\pm}0.015$} \\
Attention frozen & bf16 SR & 1e-4 & $0.307{\pm}0.018$ & $-0.353{\pm}0.015$ & $0.973{\pm}0.013$ & {\boldmath $-0.252{\pm}0.015$} \\
Attention + random & bf16 SR & 1e-4 & $0.407{\pm}0.030$ & $-0.253{\pm}0.028$ & $0.988{\pm}0.004$ & {\boldmath $-0.152{\pm}0.021$} \\
\textbf{Attention + drift-value} & bf16 SR & 1e-4 & $0.558{\pm}0.013$ & $-0.101{\pm}0.011$ & $0.991{\pm}0.004$ & reference \\
LoRA & bf16 & 1e-4 & $0.536{\pm}0.032$ & $-0.124{\pm}0.028$ & $0.990{\pm}0.005$ & $-0.023{\pm}0.030$ \\
\bottomrule
\end{tabularx}

%% file: figures/table6_lora_cont.tex
\begin{tabularx}{\textwidth}{@{}>{\raggedright\arraybackslash}Xccccc>{\columncolor{paperrow}}c@{}}
\toprule
Policy & Precision & LR & \shortstack{Retention\\(worst-task\\accuracy)} & \shortstack{Change from\\pretrained\\accuracy} & \shortstack{Acquisition\\(fact recall)} & \shortstack{$\Delta$ retention\\vs attention +\\\textbf{drift-value}} \\
\midrule
\multicolumn{7}{l}{\textit{Qwen3-8B}} \\
Full fine-tuning & bf16 SR & 1e-4 & $0.271{\pm}0.016$ & $-0.445{\pm}0.016$ & $0.858{\pm}0.089$ & variable acq. \\
DEFT core & bf16 SR & 1e-4 & $0.348{\pm}0.031$ & $-0.367{\pm}0.033$ & $0.989{\pm}0.002$ & {\boldmath $-0.285{\pm}0.033$} \\
SSU columns & bf16 SR & 1e-4 & $0.613{\pm}0.007$ & $-0.103{\pm}0.007$ & $0.990{\pm}0.004$ & $-0.021{\pm}0.010$ \\
Empirical Fisher & bf16 SR & 1e-4 & $0.632{\pm}0.009$ & $-0.083{\pm}0.009$ & $0.993{\pm}0.003$ & $-0.001{\pm}0.009$ \\
Wanda freezing score & bf16 SR & 1e-4 & $0.551{\pm}0.036$ & $-0.164{\pm}0.038$ & $0.989{\pm}0.004$ & $-0.082{\pm}0.042$ \\
RIA freezing score & bf16 SR & 1e-4 & $0.586{\pm}0.008$ & $-0.130{\pm}0.010$ & $0.990{\pm}0.002$ & {\boldmath $-0.048{\pm}0.010$} \\
Attention frozen & bf16 SR & 1e-4 & $0.402{\pm}0.026$ & $-0.313{\pm}0.030$ & $0.992{\pm}0.003$ & {\boldmath $-0.231{\pm}0.027$} \\
Attention + random & bf16 SR & 1e-4 & $0.534{\pm}0.024$ & $-0.181{\pm}0.025$ & $0.990{\pm}0.004$ & {\boldmath $-0.099{\pm}0.026$} \\
\textbf{Attention + drift-value} & bf16 SR & 1e-4 & $0.633{\pm}0.011$ & $-0.082{\pm}0.011$ & $0.988{\pm}0.004$ & reference \\
LoRA & bf16 & 1e-4 & $0.525{\pm}0.048$ & $-0.191{\pm}0.044$ & $0.890{\pm}0.097$ & variable acq. \\
\midrule
\multicolumn{7}{l}{\textit{SmolLM2-1.7B}} \\
Full fine-tuning & bf16 SR & 1e-4 & $0.489{\pm}0.006$ & $-0.174{\pm}0.011$ & $0.993{\pm}0.003$ & {\boldmath $-0.147{\pm}0.009$} \\
DEFT core & bf16 SR & 1e-4 & $0.543{\pm}0.008$ & $-0.121{\pm}0.007$ & $0.989{\pm}0.002$ & {\boldmath $-0.093{\pm}0.009$} \\
SSU columns & bf16 SR & 1e-4 & $0.637{\pm}0.010$ & $-0.027{\pm}0.008$ & $0.987{\pm}0.005$ & $+0.001{\pm}0.004$ \\
Empirical Fisher & bf16 SR & 1e-4 & $0.633{\pm}0.010$ & $-0.031{\pm}0.008$ & $0.987{\pm}0.004$ & $-0.003{\pm}0.004$ \\
Wanda freezing score & bf16 SR & 1e-4 & $0.629{\pm}0.010$ & $-0.034{\pm}0.009$ & $0.987{\pm}0.005$ & $-0.007{\pm}0.005$ \\
RIA freezing score & bf16 SR & 1e-4 & $0.624{\pm}0.009$ & $-0.040{\pm}0.007$ & $0.990{\pm}0.004$ & $-0.012{\pm}0.005$ \\
Attention frozen & bf16 SR & 1e-4 & $0.545{\pm}0.009$ & $-0.119{\pm}0.012$ & $0.990{\pm}0.003$ & {\boldmath $-0.091{\pm}0.010$} \\
Attention + random & bf16 SR & 1e-4 & $0.607{\pm}0.011$ & $-0.057{\pm}0.010$ & $0.992{\pm}0.003$ & $-0.029{\pm}0.005$ \\
\textbf{Attention + drift-value} & bf16 SR & 1e-4 & $0.636{\pm}0.011$ & $-0.028{\pm}0.009$ & $0.982{\pm}0.005$ & reference \\
LoRA & bf16 & 1e-4 & $0.624{\pm}0.008$ & $-0.040{\pm}0.012$ & $0.986{\pm}0.004$ & $-0.012{\pm}0.010$ \\
\midrule
\multicolumn{7}{l}{\textit{Qwen2.5-1.5B, one precision}} \\
\textbf{Attention + drift-value} & bf16 & 1e-4 & $0.625{\pm}0.013$ & $-0.017{\pm}0.009$ & $0.989{\pm}0.004$ & reference \\
Attention + random & bf16 & 1e-4 & $0.560{\pm}0.013$ & $-0.082{\pm}0.016$ & $0.991{\pm}0.004$ & {\boldmath $-0.065{\pm}0.014$} \\
Attention frozen & bf16 & 1e-4 & $0.504{\pm}0.006$ & $-0.138{\pm}0.011$ & $0.988{\pm}0.005$ & {\boldmath $-0.121{\pm}0.016$} \\
LoRA & bf16 & 1e-4 & $0.572{\pm}0.011$ & $-0.071{\pm}0.011$ & $0.986{\pm}0.005$ & {\boldmath $-0.053{\pm}0.012$} \\
\midrule
\multicolumn{7}{l}{\textit{Qwen3-1.7B, one precision}} \\
\textbf{Attention + drift-value} & bf16 & 1e-4 & $0.564{\pm}0.007$ & $-0.013{\pm}0.005$ & $0.990{\pm}0.003$ & reference \\
Attention + random & bf16 & 1e-4 & $0.536{\pm}0.010$ & $-0.041{\pm}0.008$ & $0.989{\pm}0.004$ & $-0.028{\pm}0.010$ \\
Attention frozen & bf16 & 1e-4 & $0.498{\pm}0.012$ & $-0.079{\pm}0.011$ & $0.989{\pm}0.002$ & {\boldmath $-0.066{\pm}0.014$} \\
LoRA & bf16 & 5e-4 & $0.241{\pm}0.006$ & $-0.337{\pm}0.009$ & $0.164{\pm}0.024$ & not matched \\
\midrule
\multicolumn{7}{l}{\textit{SmolLM2-1.7B, one precision}} \\
\textbf{Attention + drift-value} & bf16 & 1e-4 & $0.669{\pm}0.010$ & $0.006{\pm}0.002$ & $0.890{\pm}0.010$ & reference \\
Attention + random & bf16 & 1e-4 & $0.670{\pm}0.009$ & $0.006{\pm}0.002$ & $0.905{\pm}0.014$ & $+0.001{\pm}0.002$ \\
Attention frozen & bf16 & 1e-4 & $0.666{\pm}0.010$ & $0.002{\pm}0.002$ & $0.939{\pm}0.009$ & not matched \\
LoRA & bf16 & 1e-4 & $0.624{\pm}0.008$ & $-0.040{\pm}0.012$ & $0.986{\pm}0.004$ & not matched \\
\bottomrule
\end{tabularx}
\par\smallskip
{\footnotesize Means and SEs use nine seeds. SR denotes stochastic rounding. Not matched means the paired acquisition difference exceeds $\max(0.031, 2\,\mathrm{SE})$. Variable acq.\ marks a row in which one seed acquired fewer than half of the facts. DEFT freezes 40\% of units. Score masks freeze 35.2\% of every feed-forward matrix with attention frozen.}